\newtheorem{lemma}{Lemma}
\newtheorem{theorem}{Theorem}
\newcommand\trace[1]{\text{\bf Trace}\left(#1\right)}
\newcommand{\tracer}[2]{\ensuremath{\langle \!\langle {#1}, \; {#2}
\rangle \!\rangle}}
\def\G{\mathcal{G}}
\def\V{\mathcal{V}}
\def\E{\mathcal{E}}
\def\C{\mathcal{C}}
\def\real{\mathbb{R}}
\def\P{\mathcal{P}}
\def\T{\mathcal{T}}
\def\sgn{\text{\bf Sign}}
\title{Clustering using Max-norm Constrained Optimization}
\author{Ali Jalali\\alij@mail.utexas.edu\\University of Texas at Austin\\
\and
Nathan Srebro\\nati@uchicago.edu\\Toyota Technological Institute at Chicago}
\begin{document}
\maketitle





\begin{abstract}
We suggest using the max-norm as a convex surrogate constraint for
clustering.  We show how this yields a better exact cluster recovery
guarantee than previously suggested nuclear-norm
relaxation, and study the effectiveness of our method, and other
related convex relaxations, compared to other clustering approaches.
\end{abstract}

\section{Introduction}
Clustering as the problem of partitioning data into clusters with
strong similarity inside the clusters and strong dissimilarity across
different clusters is one of the main problems in machine learning. In this paper, we consider
the problem of cut-based, or \emph{correlation}, clustering
\cite{BBC02} that has received a lot of attention recently \cite{ailon2010improved,mathieu2010correlation,bagon2011large}: Given $\G(\V,\E)$ on $n$ nodes with
normalized symmetric affinity matrix $A$ (for all $u,v\in\V$: $0\leq
A_{uv}\leq 1$ and $A_{uu}=1$), we want to partition $\V$ into clusters
$\C=\{C_1,\ldots,C_k\}$ so as to minimize the total \emph{disagreement}

\begin{equation}
D(\C) = \sum_{i=1}^k\;\sum_{u,v\in C_i}\;(1-A_{uv}) + \sum_{i\neq j=1}^k\;\sum_{u\in C_i,v\in C_j}\;A_{uv}.
\nonumber
\end{equation}
The first term, captures the \emph{internal} disagreement inside
clusters, and the second term captures the \emph{external} agreement
between nodes in different clusters. In an ideal cluster, the affinities between all members of the
same cluster are $1$ and the affinities between members of two
different clusters are zero and hence the objective is zero. This
objective does not require the number of clusters to be known ahead of
time---we may decide to use any number of clusters, and this is
accounted for in the objective.  Unfortunately, finding a clustering
minimizing the disagreement $D(\C)$ is NP-Hard \cite{BBC02}.

We formulate this problem as an optimization of a convex disagreement
objective over a non-convex set of \emph{valid clustering} matrices
(Section \ref{sec:setup}) and then consider convex relaxations of this
constraint.  Recently, \citet{JCSX11} suggested a trace-norm (aka
nuclear-norm) relaxation, casting the problem as minimizing an
$\ell_1$ loss and a trace-norm penalty, and providing conditions under
which the true underlying clustering is recovered. Instead of
trace-norm, we propose using the max-norm (aka
$\gamma_2:\ell_1\rightarrow\ell_\infty$ norm) \cite{SRERENJAA05}, which is a tighter convex relaxation than the trace-norm.
Accordingly, we establish an exact recovery guarantee for our max-norm
based formulation that is strictly better then the trace-norm based
guarantee.  We show that if the affinity matrix is a
corruption of an ``ideal'' clustering matrix, with a certain bound on
the corruption, then the optimal solution of the max-norm bounded optimization problem is exactly the ideal
clustering (Section \ref{sec:theoretical}).  We also discuss even tighter convex
relaxations related to the max-norm, and suggest augmenting the convex
relaxation with a single-linkage post-processing step in case of
non-exact recovery, showing the empirical advantages of these
approaches (Section \ref{sec:enhancedAlgo}).

The approach we suggests relies on optimizing an $\ell_1$ objective
subject to a max-norm constraint. A similar optimization problem with
a trace-norm constraint (or trace-norm regularization) has recently
been the subject of some interest in the context of ``robust PCA''
\cite{RPCA,RPCAOutlier} and recovering the structure of graphical models with latent
variables \cite{Venkat10}. As with the trace-norm regularized variant,
the $\ell_1$ + max-norm problem can be formulated as an SDP and solved
using standard solvers, but this is only applicable to fairly small
scale problems.  In Section \ref{sec:opt-methods}, we discuss various optimization approaches to this problems, including approaches which preserve the sparsity of the solution.

\subsection{Relationship to the Goemans Willimason SDP Relaxation}

Our convex relaxation approach is related to the classic SDP
relaxations of max-cut \cite{GOEWIL95} and more generally the cut-norm
\cite{NOGASS06}.  In fact, if we are interested in a partition to
exactly two clusters, the correlation clustering problem is
essentially a max-cut problem, though with both positive and negative
weights (i.e.~a symmetric cut-norm problem), and our relaxation is
essentially the classic SDP relaxation of these problems.  Our
approach and results differ in several ways.

First, we deal with problems with multiple clusters, and even when the
number of clusters {\em is not} pre-determined.  If the number of
clusters $k$ {\em is} pre-determined, the correlation clustering
problem can be written as an integer quadratic program, with a $k$
variables per node, and can be relaxed
to an SDP.  But this SDP will be very different from ours, and will
involve a matrix of size $nk \times nk$, unlike our relaxation where
the matrix is of size $n\times n$ regardless of the number of
clusters.  Consequently, the rounding techniques based on (random)
projections typically employed for classic SDP relaxations do not seem
relevant here.  Instead, we employ a single-linkage post-processing as
a form of ``rounding'' imperfect solutions.

Second, the type of guarantees we provide are very different from
those in the Theory of Computation literature.  Most of the SDP
relaxation work we are aware of (including the classical work cited
above) focuses on worst case constant factor approximation guarantees.
On one hand, this means the guarantee needs to hold even on ``crazy''
inputs where there is really no reasonable clustering anyway, and
second, and on the other hand it is not clear how approximating the
objective to within a constant factor translates to recovering an
underlying clustering.  Instead, we prove that when the affinity
matrix is close enough to following some underlying ``true''
clustering, the true clustering will be recovered {\em exactly}.  This
type of guarantee is more in the spirit of compressed sensing, which
where {\em exact} recovery of a support set is guaranteed subject to
conditions on the input \cite{JCSX11}.

\subsection{Other Clustering Approaches}
There are several classes of clustering algorithms with different objectives. In hierarchical
clustering algorithms such as UPGMA \cite{SNESOK73}, SLINK
\cite{SIB73} and CLINK \cite{DEF77} the goal is to generate a sequence of clusterings by  produce a sequence of clustering by merging/splitting two clusters at each step of the sequence according to a \emph{local} disagreement objective as opposed to our global $D(\C)$. Because of this locality, these methods are known to be very sensetive to outliers.
 
Cut-based clustering algorithms such as $k$-means/medians
\cite{STE57,JAIDUB81}, ratio association \cite{SHIMAL}, ratio cut
\cite{CHASCHZIE} and normalized cut \cite{YUSHI} try to optimize an
objective function globally. The main issue with these objectives is that they are typically NP-Hard and need to know the number of clusters ahead of time, since these objectives are monotone in the number of clusters.

In contrast, spectral clustering algorithms\cite{LUX07} try to find the first $k$ principal component of the affinity matrix or a transformed version
of that \cite{MELSHI01}. These methods require the number of
clusters in advance and has been shown to be
tractable (convex) relaxations to NP-Hard cut-based algorithms
\cite{DHIGUAKUL}. These methods are
again very sensitive to outliers as they might change the principal
components dramatically.

\section{Problem Setup} \label{sec:setup}
Our approach is based on representing a clustering $\C$ through its
incidence matrix $K(\C)\in \mathbb{R}^{n\times n}$ where $K_{uv}=1$
iff $u$ and $v$ belong to the same cluster in $\C$ (i.e.~$u,v\in C_i$
for some $i$), and $K_{uv}=0$ otherwise (i.e.~if $u$ and $v$ belong to
different clusters).  The matrix $K(\C)$ is thus a permuted
block-diagonal matrix, and can also be thought of as the edge incidence
matrix of a graph with cliques corresponding to clusters in $\C$.  We
will say that a matrix $K$ is a {\bf valid clustering matrix}, or
sometimes simply {\bf valid}, if it can be written as $K=K(\C)$ for
some clustering $\C$ (i.e.~if it is a permuted block diagonal matrix,
with $1$s in the diagonal blocks).

The disagreement can then be written as either:
\begin{equation}\label{eq:absobj}
D(\C) = \|A - K(\C)\|_1 = \sum_{u,v}\;\left|A_{uv}-K(\C)_{uv}\right|
\end{equation}
or as:
\begin{equation}
  \label{eq:linearobj}
D(\C) = \sum_{u,v} K(\C)_{uv} (1-2 A_{uv}) + \sum_{uv} A_{uv}~,
\end{equation}
where the term $\sum_{uv} A_{uv}$ does not depend on the clustering
$\C$ and can thus be dropped.

We now phrase the correlation clustering problem as matrix problem,
where we would like to solve
\begin{equation}
\begin{aligned}
\min_{K}\,D(K)
\;\;\text{s.t.}\quad K\text{ is a valid clustering matrix.}\\
\end{aligned}
\label{eq:orig-optimization}
\end{equation}
 The problem is that
even though the objectives \eqref{eq:absobj} and \eqref{eq:linearobj}
are convex, the constraint that $K$ is valid is certainly not
constraint.  Our approach to correlation clustering will thus be to
relax this non-convex constraint (the validity of $K$) to a convex
constraint.

We note that although both the absolute error objective
\eqref{eq:absobj} and the linear objective \eqref{eq:linearobj} agree
on valid clustering matrices (or more generally, on binary matrices
$K$), they can differ when $K$ is fractional, and especially when $A$
is also fractional.  The choice of objective can thus be important when
relaxing the validity constraint to a convex constraint. More specifically, as long as $A$ is binary (i.e.  $A_{uv} \in
\{0,1\}$), and $0\leq K_{uv} \leq 1$, even if $K$ is fractional, the
two objectives agree.  Non-negativity of $K_{uv}$ is ensured in some,
but not all, of the convex relaxations we study.  When non-negativity
is not ensured, the absolute error objective \eqref{eq:absobj} would
tend to avoid negative values, but the linear objective might
certainly prefer them. More importantly, once the affinities $A_{uv}$ are also fractional,
the two objectives differ even for $0\leq K_{uv} \leq 1$.  While the
linear objective would tend to not care much about entries with
affinities close to $1/2$, the absolute error objective would tend to
encourage fractional values in thees cases.

The linear objective also has some optimization advantages over the
absolute function as well. From a numerical optimization point of
view, dealing with the linear objective function is easier since we do
not need to compute the sub-gradients of the $\ell_1$-norm.

\section{Max-Norm Relaxation} \label{sec:basicAlgo}
As discussed in the previous Section, we are interested in optimizing
over the non-convex set of valid clustering matrices.  The approach we
discuss here is to relaxing this set to the set of matrices with
bounded {\em max-norm} \cite{SRERENJAA05}.  The max-norm of a matrix $K$ is
defined as
\begin{equation}
\|K\|_{\max} = \min_{K=RL^T} \;\|R\|_{\infty,2}\|L\|_{\infty,2}
\nonumber
\end{equation}
where, $\|\cdot\|_{\infty,2}$ is the maximum of the $\ell_2$ norm of
the rows, and the minimization is over factorization of any internal
dimensionality.  It is not hard to see that if $K$ is a valid
clustering matrix, with $K=K(\C)$, then $\|K\|_{\max}=1$.  This is
achieved, e.g., by a factorization with $R=L$, and where each row
$R_u$ of $R$ is a (unit norm) indicator vector with $R_{ui}=1$ for
$u\in C_i$ and zero elsewhere.

Relaxing the validity constraint to a max-norm constraint, and using
the absolute error objective, we obtain the following convex
relaxation of the correlation clustering problem:
\begin{equation}
\begin{aligned}
\widehat{K}\; =\;\arg\;&\min_{K}\;\|A-K\|_1\;\;\;\text{s.t.}\quad\|K\|_{\max}\leq 1.
\end{aligned}
\label{eq:relax-optimization}
\end{equation}
Alternatively, we could have used the linear objective
\eqref{eq:linearobj} instead.  In any case, after finding
$\widehat{K}$, it is easy to check whether it is valid, and if so
recover the clustering from its block structure.  If $\hat{K}$ is
valid, we are assured the corresponding clustering is a globally
optimal solution of the correlation clustering problem.

\subsection{Theoretical Guarantee} \label{sec:theoretical}
Assuming there exists an underlying true clustering, we provide a worst-case (deterministic) guarantee for exact recovery of that clustering in the presence of noise when the affinity matrix $A$ is a binary $0-1$ matrix using absolute objective. The flavor of our result is similar to \cite{JCSX11} for trace-norm, except that we show the max-norm constraint problem recovers the underlying clustering with larger noise comparing to trace-norm constraint. This matches our intuition that max-norm is a tighter relaxation than trace-norm for valid clustering matrices.

To present our theoretical result, we start by introducing an
important quantity that our main result is based upon. Suppose
$\mathcal{C}^*=\{C_1^*,\ldots,C_k^*\}$ is the underlying true
clustering. For a node $u$ and a cluster $C_i^*$, let $d_{u,C_i^*} =
\frac{\sum_{v\in C_i^*}A_{u,v}}{|C_i^*|}$ if $u\notin C_i^*$ and
$d_{u,C_i^*}=1-\frac{\sum_{v\in C_i^*}A_{u,v}}{|C_i^*|}$ otherwise and
\begin{equation}D_{\max}(A,K) \equiv D_{\max}(A,K(\mathcal{C}^*)) =
\max_{u,i}\;d_{u,C_i^*}\nonumber\end{equation}
be the maximum of the disagreement ratios on
the adjacency matrix. This definition is inspired by \cite{JCSX11} but
is slightly different. Notice that the larger $D_{\max}(A,K)$ is, the
more noisy (comparing to ideal clusters) the graph is; and hence, the
harder the clustering becomes. In particular for ideal clusters (fully
connected inside and fully disconnected outside clusters), we have
$D_{\max}(A,K)=0$. 

We would like to ensure that when $D_{\max}(A,K)$ is small enough, our
method can recover $K$.  The following lemma helps us understand the
information theoretic limit of $D_{\max}(A,K)$, i.e.~what value of
$D_{max}$ is certainly {\em not} enough to ensure recovery, even
information theoretically:

\begin{lemma}
For any clustering $\mathcal{C}=\{C_1,\ldots,C_k\}$ and for all $\gamma>\frac{2}{5+r}$ with $r=\frac{n^2}{\sum_i\, |C_i|^2}$, there exists an affinity matrix $A$ such that $D_{\max}(A,K(\mathcal{C}))=\gamma$ and the combinatorial program \eqref{eq:orig-optimization} does not output $\mathcal{C}$.
\label{lem:DmaxBound}
\end{lemma}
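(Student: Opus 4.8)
The plan is to construct, for any target clustering $\mathcal{C}$ and any $\gamma > \frac{2}{5+r}$, an explicit affinity matrix $A$ that is consistent with having disagreement ratio exactly $\gamma$ relative to $K(\mathcal{C})$, yet for which a \emph{different} clustering $\mathcal{C}'$ achieves strictly smaller disagreement $D(\cdot)$ under the objective \eqref{eq:linearobj}. This would show that the combinatorial program \eqref{eq:orig-optimization}, whose optimum minimizes $D$, cannot output $\mathcal{C}$. First I would understand the quantity $r = n^2 / \sum_i |C_i|^2$: since $\sum_i |C_i| = n$, Cauchy--Schwarz gives $\sum_i |C_i|^2 \geq n^2/k$, so $r \leq k$, with $r$ large when clusters are small and numerous, and $r$ close to $1$ when one cluster dominates. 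The threshold $\frac{2}{5+r}$ therefore shrinks as the clustering becomes more fragmented, which matches the intuition that fragmented clusterings are easier to destabilize with noise.

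The heart of the construction is to engineer the affinity matrix so that the disagreement ratio $d_{u,C_i^*}$ is \emph{uniformly} equal to $\gamma$ across all nodes and clusters (or at least bounded by $\gamma$ with the maximum attained), pinning down $D_{\max}(A,K(\mathcal{C})) = \gamma$. Concretely, I would set the intra-cluster affinities so that each node $u \in C_i$ has average affinity $1-\gamma$ to its own cluster, and inter-cluster affinities so that each node has average affinity $\gamma$ to every foreign cluster. The natural candidate move is to take a single pair of clusters, or a single node, and show that reassigning it lowers $D$. The cleanest approach is to compare $\mathcal{C}$ against the \emph{trivial} alternative clusterings: either the all-in-one-cluster clustering or the all-singletons clustering. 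Using the linear form \eqref{eq:linearobj}, the disagreement of a candidate clustering $K$ is (up to the constant $\sum_{uv} A_{uv}$) the quantity $\sum_{uv} K_{uv}(1-2A_{uv})$, so I would write $D(\mathcal{C})$ and $D(\mathcal{C}')$ each as a sum of terms $(1-2A_{uv})$ over the in-cluster pairs of the respective clusterings, and show the difference is positive precisely when $\gamma > \frac{2}{5+r}$.

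The key computation is to express $D(\mathcal{C})$ in terms of $\gamma$ and the cluster sizes. With the uniform-$\gamma$ affinity matrix, the intra-cluster contribution is controlled by $\gamma \sum_i |C_i|^2$ and the inter-cluster contribution by $\gamma (n^2 - \sum_i |C_i|^2)$, so the quantity $\sum_i |C_i|^2$ — hence $r$ — enters naturally through the ratio of intra- to inter-cluster pair counts. I would then compute $D$ for the competing clustering (the all-singletons matrix $K=I$ is a good first guess, since it contributes nothing to the intra-cluster disagreement and zeroes out all the positive inter-cluster terms), subtract, and collect the coefficient of $\gamma$; solving the resulting inequality $D(\mathcal{C}') < D(\mathcal{C})$ for $\gamma$ should yield exactly the threshold $\frac{2}{5+r}$. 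The main obstacle I anticipate is the combinatorial bookkeeping of making the disagreement ratios land at exactly $\gamma$ while simultaneously keeping the entries of $A$ valid (i.e.\ $0 \leq A_{uv} \leq 1$ and $A_{uu}=1$); in particular the averaging definition of $d_{u,C_i^*}$ gives freedom in distributing affinity within a cluster, and I would exploit this to realize the uniform ratio exactly rather than merely in the worst case. Getting the constant $5$ (as opposed to some other small integer) to emerge will require tracking the factor-of-two between the $\ell_1$ and linear objectives and the precise pair-counting, so I would set up the size-dependent sums carefully before optimizing over which alternative clustering $\mathcal{C}'$ gives the sharpest bound.
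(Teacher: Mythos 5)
Your high-level plan (exhibit $A$ with $D_{\max}(A,K(\mathcal{C}))=\gamma$ together with a competing clustering of strictly smaller disagreement) is the same as the paper's, but the two concrete choices you commit to --- noise spread uniformly so that every disagreement ratio equals $\gamma$, and comparison against the trivial clusterings --- each provably fail, and this is exactly where the content of the lemma lies. Take your uniform matrix: $A_{uv}=1-\gamma$ for $u\neq v$ in the same cluster, $A_{uv}=\gamma$ across clusters, $A_{uu}=1$. Then indeed $D_{\max}=\gamma$ (attained by the cross-cluster ratios), but in the linear objective \eqref{eq:linearobj} every within-cluster pair of $\mathcal{C}$ carries coefficient $1-2A_{uv}=2\gamma-1<0$ and every cross-cluster pair carries $1-2\gamma>0$ whenever $\gamma<\frac{1}{2}$. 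A valid clustering matrix selects exactly its own within-cluster pairs, and $K(\mathcal{C})$ selects all of the negatively-weighted pairs and none of the positively-weighted ones, so it is the \emph{unique} optimum of \eqref{eq:orig-optimization} for every $\gamma<\frac{1}{2}$. Since $r\geq 1$, we have $\frac{2}{5+r}\leq\frac{1}{3}$, so the whole interval $\frac{2}{5+r}<\gamma<\frac{1}{2}$ is a regime in which your matrix admits no counterexample whatsoever. The trivial alternatives fare no better even if you restructure the entries while keeping all ratios equal to $\gamma$ (the ``freedom'' you planned to exploit): the disagreements of the all-singletons matrix $I$ and of the all-ones matrix depend on $A$ only through row sums that the ratio constraints pin down, and one computes $D(I)-D(K(\mathcal{C}))=(1-2\gamma)\sum_i|C_i|^2-n$ and $D(\mathbf{1}\mathbf{1}^T)-D(K(\mathcal{C}))=(1-2\gamma)\left(n^2-\sum_i|C_i|^2\right)$; the resulting thresholds are $\frac{1}{2}\left(1-n/\sum_i|C_i|^2\right)$ and $\frac{1}{2}$, e.g.\ $\frac{1}{2}(1-k/n)$ and $\frac{1}{2}$ for $k$ balanced clusters, versus the claimed $\frac{2}{5+k}$. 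So neither the $r$-dependence nor the constant $5$ can emerge from your setup.

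The idea you are missing is that the adversarial noise must be \emph{concentrated}, and the competing clustering must chase that concentration; this is what the paper's (figure-based) construction does. Inside each cluster $C_i$ it disconnects two planted sub-groups of size about $\gamma|C_i|$ from each other, and between each pair of clusters $C_i,C_j$ it completely joins sub-groups of sizes about $\gamma|C_i|$ and $\gamma|C_j|$. Because $d_{u,C_i}$ is a per-node \emph{average}, every ratio is still at most $\gamma$ (with equality attained), so $D_{\max}=\gamma$; but the corrupted pairs all live inside or between small sub-groups, so the disagreement that $\mathcal{C}$ must absorb is only quadratic in $\gamma$, namely $B(\mathcal{C}_1)=\gamma^2\sum_i|C_i|^2+\frac{\gamma^2}{2}\sum_i|C_i|(n-|C_i|)$, whereas the alternative clustering that regroups nodes along the planted sub-groups pays $B(\mathcal{C}_2)=\gamma(1-2\gamma)\sum_i|C_i|^2$, linear in $\gamma$ but proportional to $\sum_i|C_i|^2$ rather than $n^2$. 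The inequality $B(\mathcal{C}_2)<B(\mathcal{C}_1)$ simplifies exactly to $\gamma>\frac{2}{5+r}$. In other words, the threshold comes from a quadratic-versus-linear trade-off between the noise the planted clustering absorbs and the noise an adversarial clustering can exploit --- not from intra-versus-inter pair counting or from the factor of two between the $\ell_1$ and linear objectives that you planned to track. Under your uniform noise this trade-off never appears, because the planted clustering's disagreement is already linear in $\gamma$ at the maximal scale $n^2$ (namely $D(K(\mathcal{C}))=\gamma n^2$), which is the worst possible arrangement for the adversary.
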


\begin{figure}[t]
\centering
\includegraphics[width=3.5in]{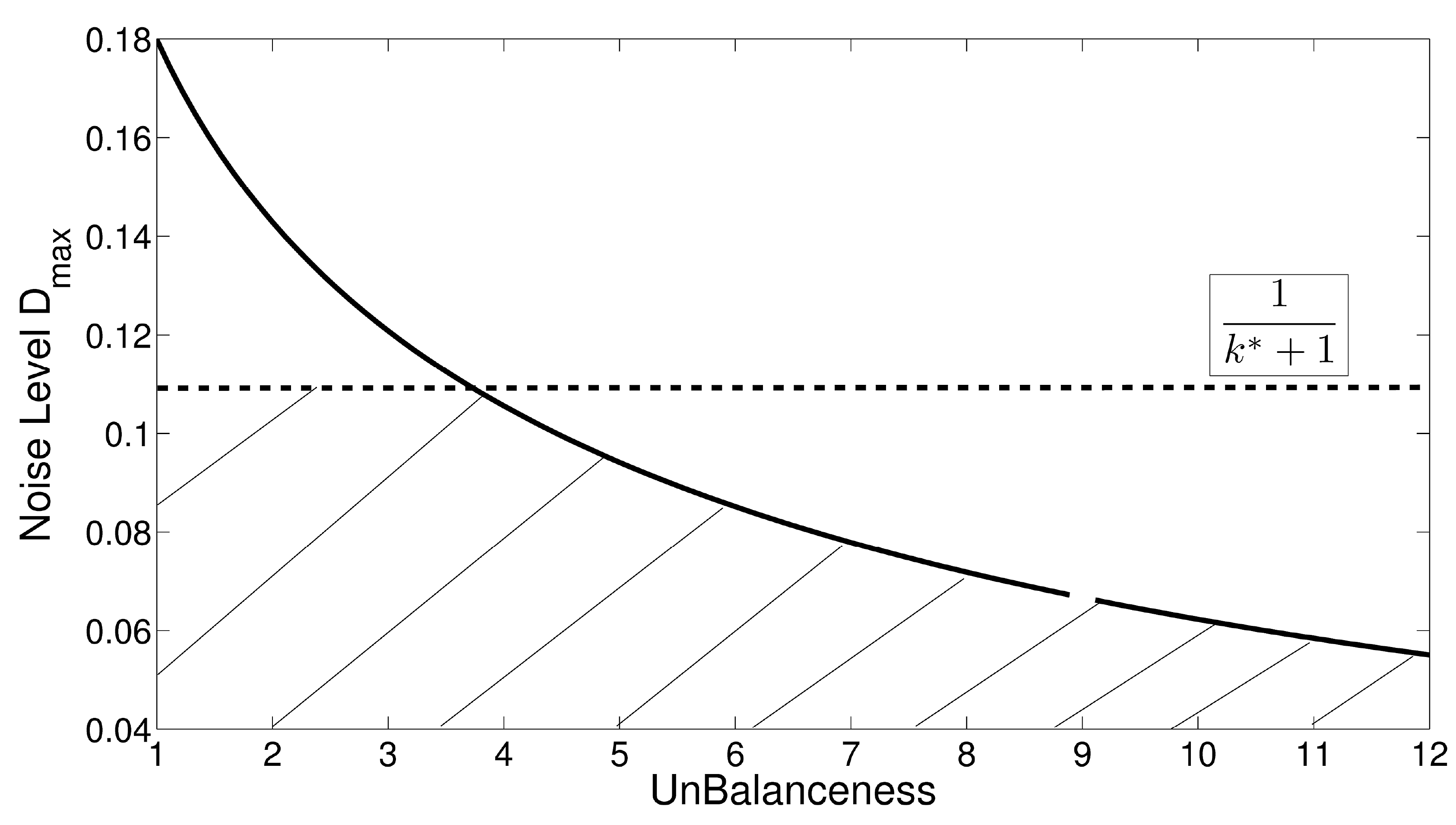}
\caption{Theorem~\ref{thm:deterministic} guarantee region of the noise level $D_{\max}$ vs the unbalanceness parameter $\frac{1}{k^*}\sum_i\left(\frac{|C^*_i|}{|C_{\min}|}\right)^2$.}
\label{fig:region}
\end{figure}

Note that the minimum of $\frac{2}{5+r}$ is attained when all clusters have equal sizes. If we have $k^*$ clusters of size $\frac{n}{k^*}$, then $r=k^*$ and the bound in Lemma~\ref{lem:DmaxBound} asserts that if $D_{\max}(A,K)>\frac{2}{k^*+5}$, then there are examples for which the original clustering cannot be recovered by the combinatorial program \eqref{eq:orig-optimization}. This implies that $D_{\max}(A,K)$ cannot be scaled better than $\Theta(\frac{1}{k^*})$ in general even without convex relaxation.

Suppose there exist a true underlying clustering $\mathcal{C}^*$ with $k^*$ clusters. Let $C_{min}$ be the smallest size underlying true cluster and we are given an affinity matrix $A$ with $D_{\max}=D_{\max}(A,K(\mathcal{C}^*))$. Introducing lagrange multiplier $\mu$, we consider the optimization problem
\begin{equation}
\begin{aligned}
\widehat{K}_{\mu}\; =\;\arg\;&\min_{K}\;\frac{1-\mu}{n^2}\,\|A-K\|_1\;+\;\mu\,\|K\|_{\max}.
\end{aligned}
\label{eq:equiv-relaxed-optimization}
\end{equation}
 The following theorem characterizes the noise regime under which the simple max-norm relaxation \eqref{eq:equiv-relaxed-optimization} recovers $\mathcal{C}^*$.

\begin{theorem}
For binary $0-1$ matrix $A$, if $D_{\max}<\frac{1}{k^*+1}$ is small enough to satisfy $\frac{1}{k^*}\sum_i\left(\frac{|C^*_i|}{|C_{\min}|}\right)^2\leq\frac{(1-3D_{\max})^2}{(1+D_{\max})D_{\max}}$ then, for any $\mu_0$ satisfying $\frac{(1+D_{\max})}{(1-3D_{\max})|C_{\min}|^2} <\frac{(1-\mu_0)k^*}{\mu_0 n^2} <\frac{(1-3D_{\max})k^*}{D_{\max}\sum_i|C_i^*|^2}$, the matrix $\widehat{K}_{\mu_0}$ (the solution to \eqref{eq:equiv-relaxed-optimization}) is unique and equal to the matrix $K^*=K(\mathcal{C}^*)$ (the solution to \eqref{eq:orig-optimization}).
\label{thm:deterministic}
\end{theorem}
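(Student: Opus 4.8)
The plan is to certify that $K^\ast=K(\mathcal C^\ast)$ is the unique global minimizer of the convex objective $f(K)=\frac{1-\mu_0}{n^2}\|A-K\|_1+\mu_0\|K\|_{\max}$ by producing an explicit dual (subgradient) certificate. Writing $\lambda=\frac{1-\mu_0}{\mu_0 n^2}$, the point $K^\ast$ minimizes $f$ iff $0\in\partial f(K^\ast)$, i.e.\ there is $G\in\partial\|K^\ast\|_{\max}$ and a subgradient $S$ of $K\mapsto\|A-K\|_1$ at $K^\ast$ with $G=-\lambda S$. Because $A$ and $K^\ast$ are binary, $S$ is pinned on the disagreement entries and free in $[-1,1]$ on the agreement entries; translating this through $G=-\lambda S$ forces $G_{uv}=-\lambda$ on intra-cluster missing edges ($u,v\in C_i^\ast$, $A_{uv}=0$) and $G_{uv}=+\lambda$ on inter-cluster spurious edges ($u\in C_i^\ast$, $v\in C_j^\ast$, $i\neq j$, $A_{uv}=1$), while leaving $G_{uv}$ free in $[-\lambda,\lambda]$ elsewhere. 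In particular $\|G\|_\infty\le\lambda$. Using the fact that the subdifferential of a norm at a nonzero point is $\partial\|K^\ast\|_{\max}=\{G:\langle G,K^\ast\rangle=\|K^\ast\|_{\max}=1,\ \|G\|_{\max}^\ast\le1\}$ (and $\|K^\ast\|_{\max}=1$ as already shown), the entire task reduces to choosing the free entries of $G$ so that simultaneously $\langle G,K^\ast\rangle=1$ and the dual max-norm obeys $\|G\|_{\max}^\ast\le1$.

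First I would build $G$ from the block structure of $K^\ast$: set it to a positive value on the intra-cluster present edges so as to generate the required positive mass $\langle G,K^\ast\rangle=\sum_i\sum_{u,v\in C_i^\ast}G_{uv}=1$, keep it pinned to $\mp\lambda$ on the forced entries above, and take it small (e.g.\ zero) on the inter-cluster absent edges. The two endpoints of the admissible range for $\mu_0$ then emerge as exactly the feasibility window for such a $G$. The lower endpoint comes from the normalization $\langle G,K^\ast\rangle=1$ under the entrywise bound $|G_{uv}|\le\lambda$: the smallest cluster $C_{\min}$ is the binding constraint, and, accounting for the at-most-$D_{\max}$ fraction of each block forced to $-\lambda$, this requires $\lambda$ large enough, giving $\frac{(1+D_{\max})}{(1-3D_{\max})|C_{\min}|^2}<\frac{(1-\mu_0)k^\ast}{\mu_0 n^2}$. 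The upper endpoint comes from dual feasibility $\|G\|_{\max}^\ast\le1$, whose only threat is the inter-cluster mass forced to $+\lambda$ (of total count controlled by $D_{\max}\sum_i|C_i^\ast|^2$); this requires $\lambda$ small enough, giving $\frac{(1-\mu_0)k^\ast}{\mu_0 n^2}<\frac{(1-3D_{\max})k^\ast}{D_{\max}\sum_i|C_i^\ast|^2}$. Non-emptiness of this interval is, after rearrangement, precisely the balance hypothesis $\frac{1}{k^\ast}\sum_i(|C_i^\ast|/|C_{\min}|)^2\le\frac{(1-3D_{\max})^2}{(1+D_{\max})D_{\max}}$, so the theorem's two conditions are exactly what guarantee a certificate exists.

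The hard part will be verifying $\|G\|_{\max}^\ast\le1$ for this noisy certificate, since the dual max-norm is itself a Grothendieck-type (intractable) quantity that cannot be evaluated directly. Here I would pass to the SDP characterization of the max-norm and invoke its dual: it suffices to exhibit nonnegative diagonal matrices $D_1,D_2$ with $\mathrm{tr}(D_1)+\mathrm{tr}(D_2)\le2$ such that $\begin{pmatrix}D_1 & G\\ G^\top & D_2\end{pmatrix}\succeq0$. Since the noiseless part of $G$ is block-diagonal and, with the positive choice above, positive semidefinite, a block-constant diagonal choice certifies it with slack; the real work is then a perturbation/eigenvalue estimate showing that the disagreement entries---whose per-row count is bounded by $D_{\max}|C_i^\ast|$ within clusters and $D_{\max}|C_i^\ast|$ into each cluster---do not destroy the semidefiniteness. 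This is precisely where the hypotheses $D_{\max}<\frac{1}{k^\ast+1}$ and the balance bound are consumed.

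Finally, for uniqueness I would arrange the certificate to be strictly feasible in the relevant directions: strict inequality $|G_{uv}|<\lambda$ on the agreement entries, and strict definiteness of the SDP certificate transverse to the tangent space of the max-norm ball at $K^\ast$. The standard argument then shows that for any competing optimum $K'=K^\ast+\Delta$ the subgradient inequality must hold with equality, and strictness forces $\Delta=0$, so $\widehat K_{\mu_0}=K^\ast$ uniquely. I expect the dual-feasibility perturbation bound of the third paragraph to be the technical core, with the $\ell_1$-side sign bookkeeping and the normalization being essentially routine.
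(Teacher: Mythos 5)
Your overall architecture---certifying $K^*$ via $0\in\partial f(K^*)$, with the $\ell_1$ term pinning the certificate to $\mp\lambda$ on disagreement entries and leaving it free in $[-\lambda,\lambda]$ on agreement entries, the lower endpoint of the $\mu_0$ window coming from the entrywise constraint on the smallest cluster and the upper endpoint from dual feasibility of the max-norm part---is the same dual-certificate strategy the paper uses, and your bookkeeping is correct: the sign analysis, the characterization $\partial\|K^*\|_{\max}=\{G:\langle G,K^*\rangle=1,\ \|G\|_{\max}^*\le 1\}$, and the SDP description of the dual max-norm (diagonal $D_1,D_2\succeq 0$, $\mathrm{tr}(D_1)+\mathrm{tr}(D_2)\le 2$, block matrix PSD) are all sound. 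The paper parameterizes the same subdifferential differently, through the variational form $\|K\|_{\max}=\max_{\|X\|_2\le 1}\|K\circ X\|_2$ and an auxiliary maximizer $X^*$, leading to its four sufficient conditions (a)--(d); your attribution of the two bounds matches, in spirit, its conditions (b) and (d).

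The genuine gap is in the step you yourself flag as the technical core, and the plan you give for it fails at the start. You propose to certify the noiseless block part ``with slack'' and then absorb the disagreement entries by a perturbation/eigenvalue estimate. But no slack is available: since $\langle G,K^*\rangle=1=\|K^*\|_{\max}$, every admissible $G$ satisfies $\|G\|_{\max}^*\ge \langle G,K^*\rangle/\|K^*\|_{\max}=1$, hence exactly $1$; consequently any valid SDP certificate has $\mathrm{tr}(D_1)+\mathrm{tr}(D_2)=2$ exactly, and complementary slackness against the primal-optimal factorization of $K^*$ forces $(D_1-G)\mathbf{1}_{C_i^*}=0$ for every cluster, i.e.\ the certificate is singular along all cluster-indicator directions. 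A tight, singular certificate cannot absorb a generic perturbation (the pinned $\pm\lambda$ entries) by an eigenvalue-margin argument; it can only be rescued by correcting the \emph{free} entries so that the equality structure on the clustering subspace and the pinned values on the disagreement support are both restored exactly---and these two requirements interfere, since fixing one perturbs the other. Resolving that regress is precisely the content of the paper's construction: the certificate $Q$ is built from the alternating-projection series $W(\cdot),V(\cdot)$ between the support space $\Omega$ and the clustering space $\T$, whose geometric convergence needs the angle condition $\T\cap\Omega=\{0\}$ (Lemma~\ref{lem:space-separation}, i.e.\ $\alpha=2D_{\max}<1$), after which the strict bounds on $\P_{\Omega^\perp}(Q\circ X^*)$ and $\P_{\T^\perp}(Q)$ produce the two endpoints of the $\mu_0$ interval. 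Your construction (block-constant free entries, zero on clean inter-cluster entries) contains no such correction mechanism and cannot satisfy the exact equality constraints once noise is present, so the central estimate is not merely unproven but unreachable along the stated route; the same objection applies to your uniqueness argument, whose ``strict feasibility transverse to the tangent space'' must be established for the corrected certificate, not the naive one.
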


\noindent {\bf Remark 1:} Consider the parameter $\frac{1}{k^*}\sum_i\left(\frac{|C^*_i|}{|C_{\min}|}\right)^2$ in the theorem. Notice that for a balanced underlying clustering ($k^*$ clusters of size $n/k^*$), this parameter is $1$ and as the underlying clustering gets more and more unbalanced, this parameter increases. That motivates to call it \emph{unbalanceness} of the clustering. It is clear that as unbalanceness parameter increases, the region of $D_{\max}$ for which our theorem guarantees the clustering recovery shrinks. We plot the admissible region of $D_{\max}$ due to unabalanceness in Fig~\ref{fig:region}.

\noindent {\bf Remark 2:} According to the Lemma~\ref{lem:DmaxBound}, the bound on $D_{\max}$ is order-wise tight and can be only improved by a constant in general.

\begin{figure}[t]
\centering
\includegraphics[width=2.2in]{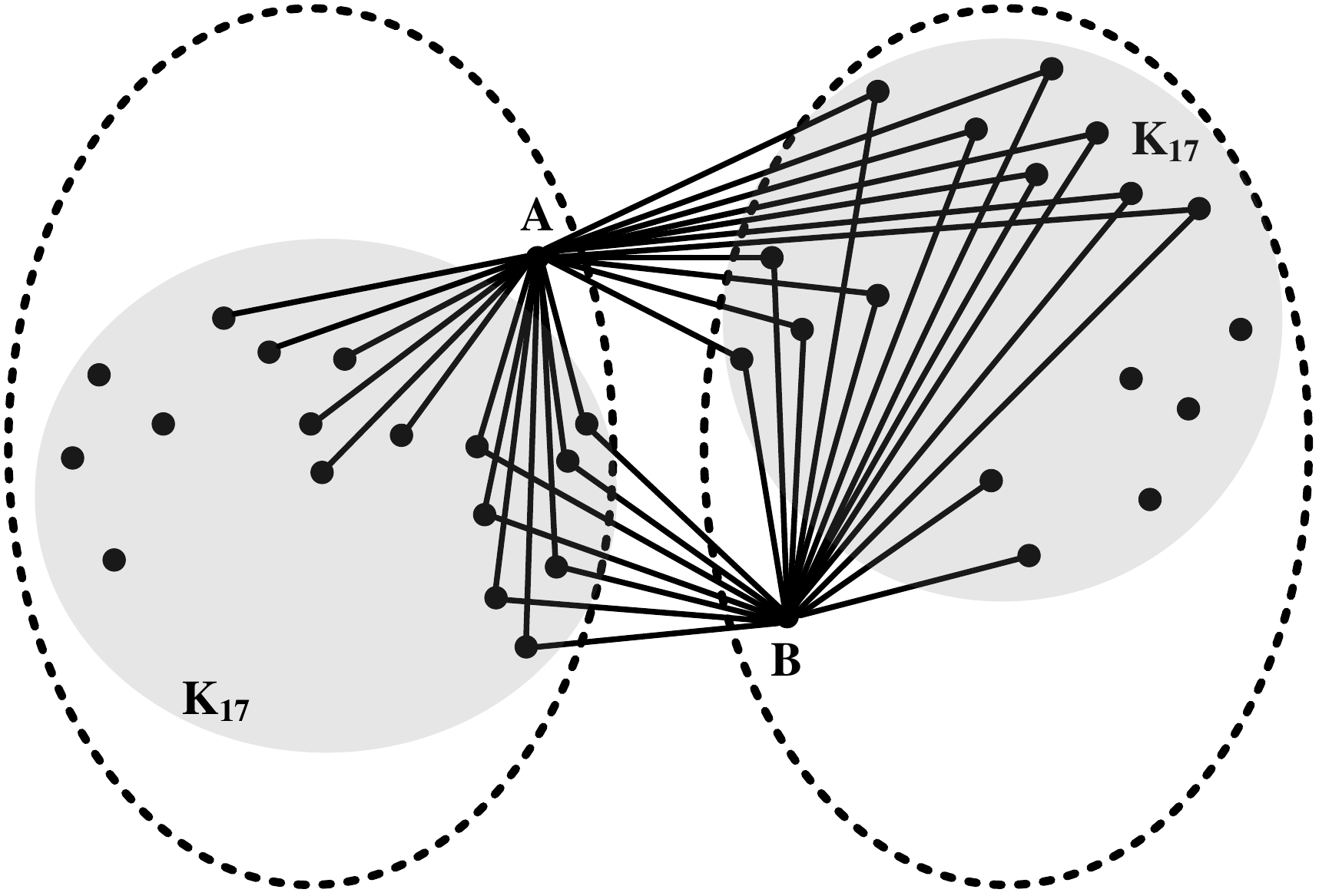}
\caption{Starting from two ideal clusters of size 18, we disconnect node $A$ from 4 nodes on the left cluster and connect it to 11 nodes on the right cluster. Moreover, we disconnect node $B$ from 4 nodes on the right cluster and connect it to 7 nodes on the left cluster as shown. The optimal clustering according to \ref{eq:orig-optimization} is still the original two clusters.}
\label{fig:countexamp}
\end{figure}

\begin{figure*}[t]
\centering
\subfigure[Balanced; Binary]{
\includegraphics[width=0.4\linewidth]{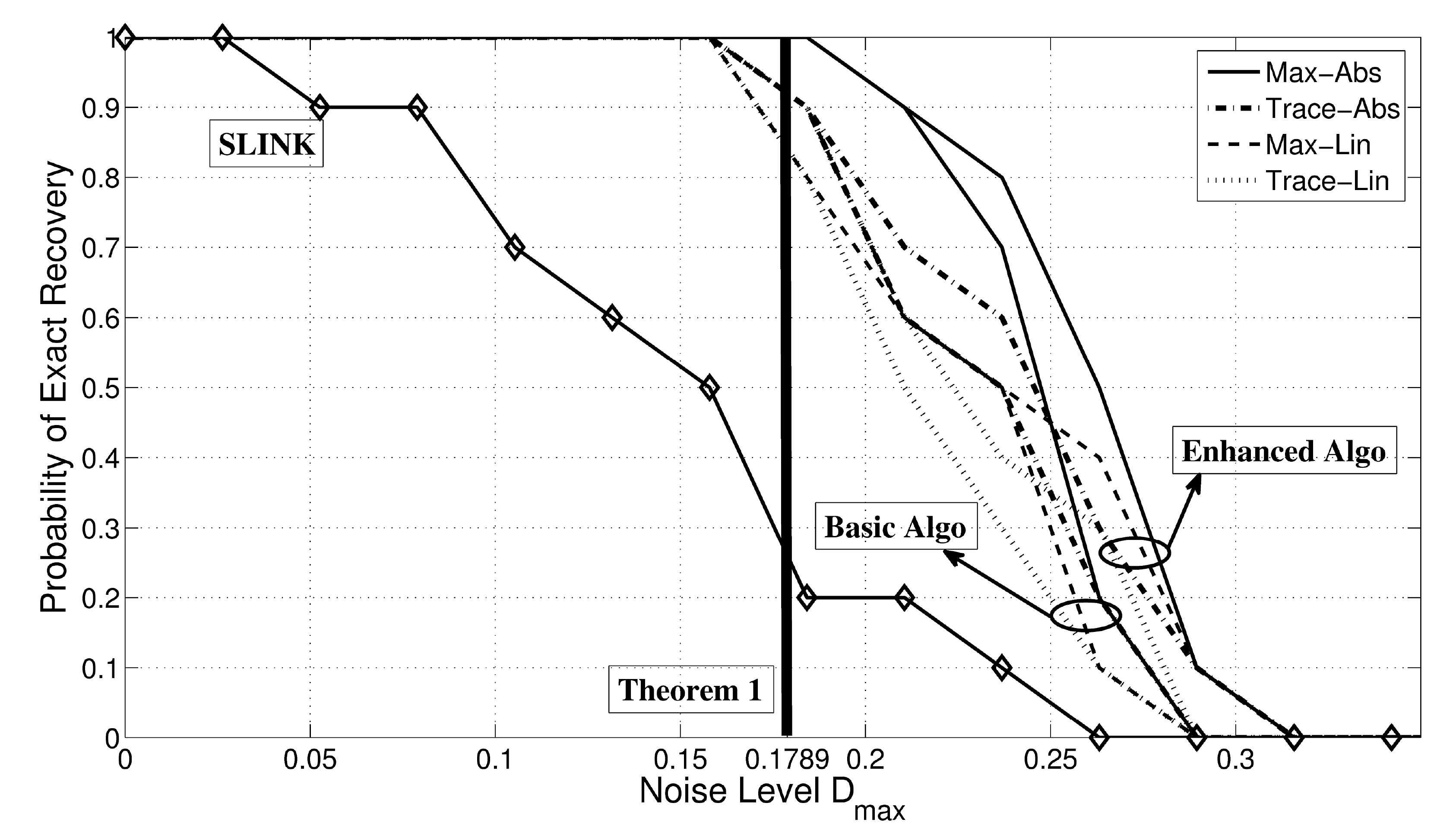}
\label{fig:subfig-bal-int-exact}
}
$\qquad\qquad$
\subfigure[UnBalanced; Binary]{
\includegraphics[width=0.4\linewidth]{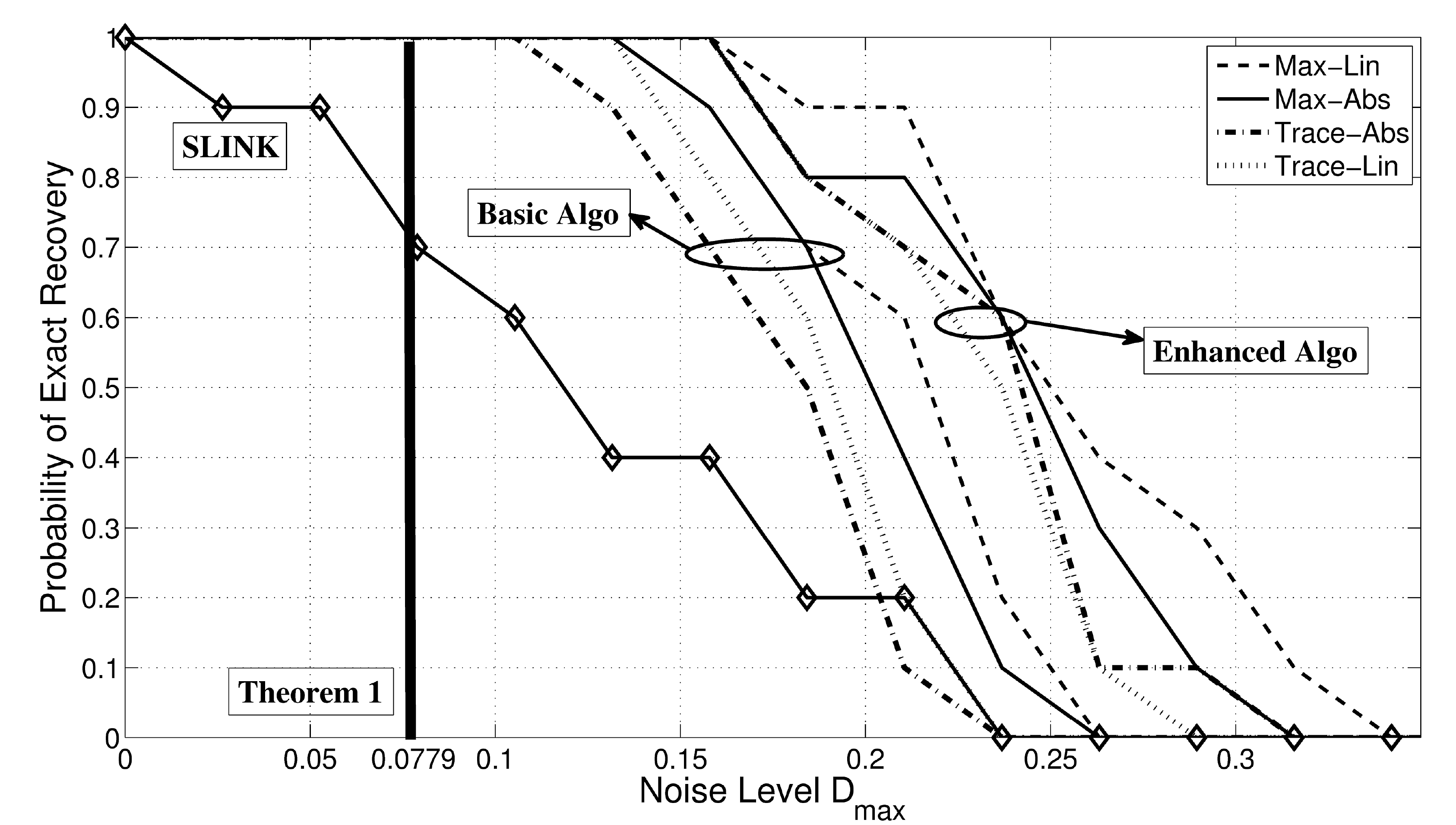}
\label{fig:subfig-unbal-int-exact}
}
\subfigure[Balanced; Fractional]{
\includegraphics[width=0.4\linewidth]{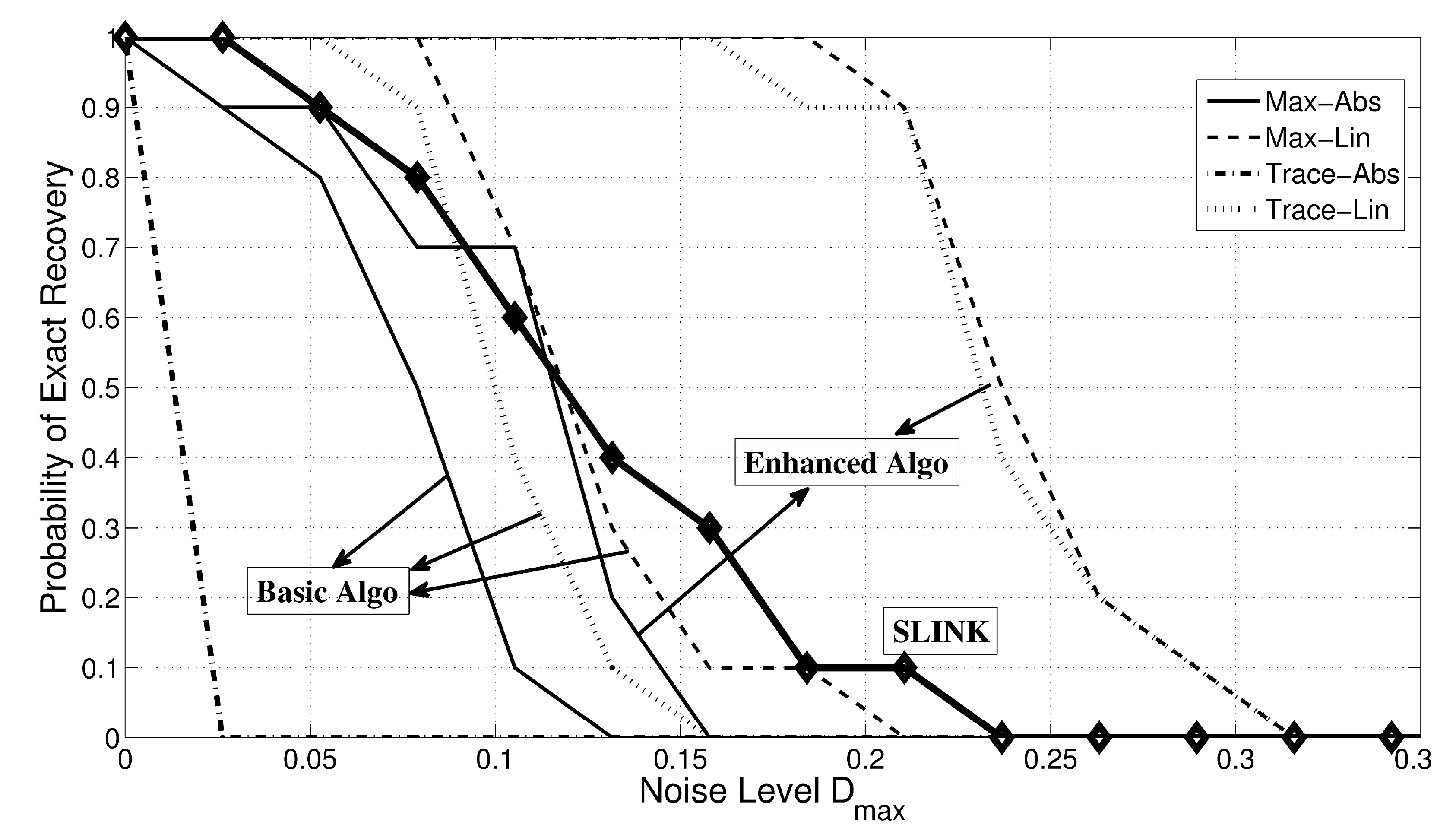}
\label{fig:subfig-bal-frac-exact}
}
$\qquad\qquad$
\subfigure[UnBalanced; Fractional]{
\includegraphics[width=0.4\linewidth]{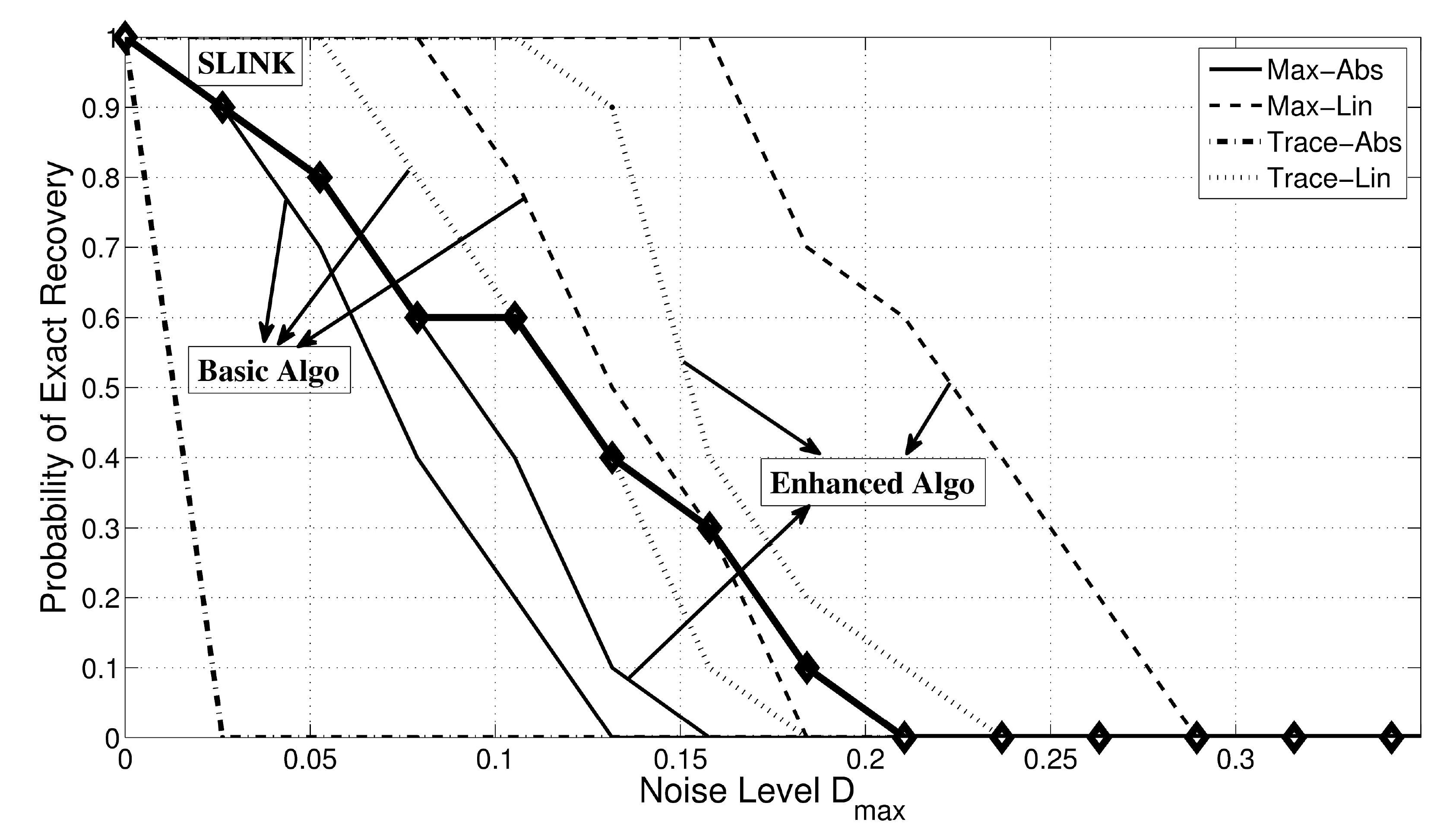}
\label{fig:subfig-unbal-frac-exact}
}
\label{fig:fig2}
\caption{Probability of exact clustering recovery for max-norm and trace-norm constrained algorithms under absolute $\|A-K\|_1$ and linear $\sum_{i,j} K_{ij}(1-2A_{ij})$ objectives. There are $4$ clusters of size $25$ for the balanced case and three clusters of size $30$ + one cluster of size $10$ for the unbalanced case. We consider two cases for each graph; where the affinity matrix is binary and when it is not. We both show the results for simple max-norm relaxation (basic algorithm) and tighter relaxations presented in Section~\ref{sec:enhancedAlgo} (enhanced algorithm). The result shows that max-norm constrained optimization recovers the exact clustering matrix under higher noise regimes better than trace-norm and single-linkage algorithm. Also, the linear objective seems to be performing better than the absolute objective for the clustering problem in most cases.}
\end{figure*}

\subsection{Comparison to Single-Linkage Algorithm}

Considering single-linkage algorithm (SLINK) \cite{SIB73} as a baseline for
clustering, we compare the power our algorithm in cluster recovery
with that. SLINK generates a hierarchy of clusterings starting with
each node as a cluster. At each iteration, SLINK measures the
similarity of all pairs of clusters and combines the most similar pair
of clusters to a new cluster. We consider the closedness of the
columns $A_i$ and $A_j$ as the similarity measure of nodes $i$ and
$j$. 

Consider the graph shown in Fig.~\ref{fig:countexamp}. With exhaustive search, one can show that the non-convex problem \eqref{eq:orig-optimization} outputs two clusters as shown.  Running SLINK on this graph, the algorithm first finds two cliques of size 17 and nodes $A$ and $B$ as four separate clusters in the hierarchy. Next, it combines nodes $A$ and $B$ as a separate cluster since they are more similar to each other than to their own clusters. This means that single linkage algorithm will never find the correct clustering. However, it can be easily checked that our proposed max-norm constrained algorithm will recover the solution of \eqref{eq:orig-optimization}.

\subsection{Comparison to Trace-Norm Constrained Clustering} \label{sec:trace-comparison}

Since the max-norm constraint is strictly a tighter relaxation to the trace-norm constraint, we expect the max-norm algorithm to perform better. Our theorem shows improvement over the guarantees provided for trace-norm clustering. Comparing to the result of \cite{JCSX11} on trace-norm ($D_{\max}\leq\frac{|C_{\min}|}{4n}$), the max-norm tolerates more noise. To see this, consider a balanced clustering, then trace-norm requires $D_{\max}\leq\frac{1}{4k^*}$ and max-norm requires $D_{\max}\leq\min(\frac{1}{k^*+1},0.1789)$ which is larger than $\frac{1}{4k^*}$ for all $k^*$. The difference gets more clear for unbalanced clustering. Suppose we have one small cluster of constant size $|C_{\min}|$ and other clusters are approximately of size $\frac{n}{k^*}$. As $(n,k^*)$ scales, trace-norm guarantee requires that $D_{\max}=o(\frac{1}{n})$ which is inverse proportional to the size of the smallest cluster, whereas, max-norm guarantee requires $D_{\max}=o(\frac{k^*}{n})$ which is inverse proportional to the size of the largest cluster. This is a huge theoretical advantage in our theorem.

Besides comparing the provided guarantees, we compare max-norm clustering with trace-norm clustering both deterministically and probabilistically. Running Trace-Norm constrained minimization \cite{JCSX11} on the graph shown in Fig.~\ref{fig:countexamp}, the resulting clustering consists of two clusters and node $B$ belongs to the correct cluster. However, node $A$ belongs to both clusters! -- The clustering matrix contains two blocks of ones and the row/column corresponding to the node $A$ contain all ones. Also, the diagonal entry corresponding to node $A$ is larger than one and the diagonal entry corresponding to the node $B$ is less than one. In short, this algorithm is confused as of which cluster the node $A$ belongs to.

Further, we compare our algorithm with trace-norm algorithm \cite{JCSX11} and SLINK on a probabilistic setup. Start from two different ideal clusters on $100$ nodes: a) \emph{Balanced} clusters: four ideal clusters of size $25$, b) \emph{Unbalanced} clusters: three ideal clusters of size $30$ and one ideal cluster of size $10$. Then, gradually increase $D_{\max}$ on both graphs and run all algorithms and report the probability of success in exact recovery of the underlying clusters. Although our theoretical guarantee is for binary affinity matrices, here, we run the same experiment for fractional affinity matrix. We run all experiments for both absolute and linear objectives. Fig.~\ref{fig:fig2} shows that in all cases max-norm outperforms the trace-norm and the improvement is more significant for unbalanced clustering with fractional affinity matrix. Moreover, this experiments reveal that the absolute objective has slight advantage if the affinity matrix is binary and clusters are balanced; otherwise, the linear objective is better.

\section{Max-norm + $\ell_1$-norm Optimization} \label{sec:opt-methods}
In this Section we consider optimization problems of the form
\eqref{eq:relax-optimization}. This problem recovers a sparse and low-rank matrix from their sum, considering max-norm as a proxy to rank. In Section \ref{sec:SDP}, we discuss how \eqref{eq:relax-optimization} can be formulated as an SDP, allowing us to easily solve it using standard SDP solvers, as long as the problem size is relatively small. We then propose three other methods to numerically solve the optimization problem \eqref{eq:relax-optimization}.

\subsection{Semi-Definite Programming Method} \label{sec:SDP}

Following \citet{SRERENJAA05}, we introduce dummy variables
$L,R\in\real^{n\times n}$ and reformulate
\eqref{eq:relax-optimization} as the following SDP problem
\begin{equation}
\begin{aligned}
  \widehat{K} = \arg\,&\min_{K,L,R}\; \|A - K\|_1\\
  &\;\text{s.t.}\quad \left[\begin{tabular} {cc}$L\,$&$\,K$\\
      $K^T$&$R$ \end{tabular}\right]\succeq 0\quad\text{and}\quad
  L_{ii},R_{ii}\leq 1
\end{aligned}
\nonumber
\end{equation}
These constraints are equivalent to the condition $\|K\|_{\max}\leq
1$. This SDP can be solved using generic SDP solvers, though is very slow and is not scalable to large problems.

\subsection{Factorization Method}

Motivated by \citet{LRSST10}, we introduce dummy variables $L,R\in\real^{n\times n}$ and let $K=LR^T$. With this change of variable, we can reformulate \eqref{eq:relax-optimization} as

\begin{equation}
\begin{aligned}
\widehat{K} = \widehat{L}\widehat{R}^T = \arg\,&\min_{L,R}\; \|A - LR^T\|_1\\
&\;\text{s.t.}\quad \|L\|_{\infty,2},\|R\|_{\infty,2}\leq 1. 
\end{aligned}
\nonumber
\end{equation}
This problem is not convex, but it is guaranteed to have no local minima for large enough size of the problem \cite{BURCHO06}. Furthermore, if we now the optimal solution $\hat{K}$ has rank at most $r$, we can take $L,R$ to be $\real^{n \times (r+1)}$. In practice, we truncate to some reasonably high rank $r$ even without a known gurantee on the rank of the optimal solution. To solve this problem iteratively, \citet{LRSST10} suggest the following update
\begin{equation}
\left[\begin{tabular}{c}$L$\\$R$\end{tabular}\right]_{k+1}\!\!\!\!\!\!\!\! = \P_{\max}\left(\left[\begin{tabular}{c}$L$\\$R$\end{tabular}\right]_{k}\!\!\!\!\! +\frac{\tau}{\sqrt{k}}\left[\begin{tabular}{c}$\sgn(A-LR^T)\;\;R$\\$\sgn(A-LR^T)^T\,L$\end{tabular}\right]_{k}\right).
\nonumber
\end{equation}
The projection $\P_{\max}(\cdot)$ operates on rows of $L$ and $R$; if
$\ell_2$-norm of a row is less than one, it remains unchanged,
otherwise it will be rescaled so that the $\ell_2$-norm becomes
one.

A possible problem with the above formulation is the lack of
``sparsity" in the following sense:  The $\ell_1$ objective is likely
to yield and optimal solution $K^*$ with many non-zeros in $A-K^*$,
i.e.~where $K^*$ is {\em exactly} equal to $A$ on some of the
entries.  However, gradient steps on the factorization are not likely
to end up in exactly sparse solutions, and we are not likely to see
any such sparsity in solutions obtained by the above method.  

\subsection{Loss Function Method}
There are gradient methods such as truncated gradient \cite{LANLIZHA09} that produce sparse solution, however, these methods cannot be applied to this problem. We introduce a surrogate optimization problem to \eqref{eq:relax-optimization} by adding a loss function. For some large $\lambda\in\real$, solve
\begin{equation}
\begin{aligned}
\widehat{K} = A - \widehat{Z} = \arg\,&\min_{Z,L,R}\; \|Z\|_1 + \lambda\|A-Z-LR^T\|_2^2\\
&\;\text{s.t.}\quad \|L\|_{\infty,2},\|R\|_{\infty,2}\leq 1. 
\end{aligned}
\nonumber
\end{equation}
Here, the matrix $Z$ is sparse and includes the disagreements. For sufficiently large values of $\lambda$, the loss function ensures that the matrix $A-Z$ is close to the matrix $LR^T$ that is a bounded max-norm matrix. To solve this problem iteratively, we use the following update
\begin{equation}
\begin{aligned}
Z_{k+1} &= \P_{\ell_1}\left(Z_k + \frac{\tau\lambda}{\sqrt{k}} (A - Z - LR^T)_k\right)\\
\left[\begin{tabular}{c}$L$\\$R$\end{tabular}\right]_{k+1}\!\!\!\!\!\!\!\! &= \P_{\max}\left(\left[\begin{tabular}{c}$L$\\$R$\end{tabular}\right]_{k}\!\!\!\!\! +\frac{\tau\lambda}{\sqrt{k}}\left[\begin{tabular}{c}$(A-Z-LR^T)\;\;R$\\$(A-Z-LR^T)^T\,L$\end{tabular}\right]_{k}\right).
\end{aligned}
\nonumber
\end{equation}
Here, $\P_{\ell_1}(\cdot)$ operates on entries; if an entry has the same sign before and after the update, it remains unchanged; otherwise, it will be set to zero. Solving directly for large values of $\lambda$ might cause some problems due to the finite numerical precision. In practice, we start with some small value say $\lambda=1$ and double the value of $\lambda$ after some iterations. This way, we gradually put more and more emphasis on the loss function as we get closer to the optimal point.

\subsection{Dual Decomposition Method}

Inspired by \citet{ROC70}, we first reformulate \eqref{eq:relax-optimization} by introducing a dummy variable $Z\in\real^{n\times n}$ as follows
\begin{equation}
\begin{aligned}
\widehat{K} = \arg\,&\min_{Z,K}\; \|A-K\|_1 \\
&\;\text{s.t.}\quad \|Z\|_{\max}\leq 1 \qquad\text{and}\qquad Z=K. 
\end{aligned}
\nonumber
\end{equation}
Then, introducing a Lagrange multiplier $\Lambda\in\real^{n\times n}$, we propose the following equivalent problem:
\begin{equation}
\begin{aligned}
\widehat{K} = \arg\,&\max_{\Lambda}\min_{Z,K}\; \|A-K\|_1 +\tracer{\Lambda}{K-Z} \\
&\;\text{s.t.}\quad \|Z\|_{\max}\leq 1. 
\end{aligned}
\nonumber
\end{equation}
Here, $\tracer{\cdot}{\cdot}$ is the trace of the product. This problem is a saddle-point convex problem in $(Z,K,\Lambda)$. To solve this, we iteratively fix $\Lambda$ and optimize over $(K,Z)$ and then, using those optimal values of $(K,Z)$, update $\Lambda$. 

For a fixed $\Lambda$, the problem can be separated into two optimization problems over $K$ and $Z$ as
\begin{equation}
\widehat{K}(\Lambda) = \arg\,\min_{K}\; \|A-K\|_1 +\tracer{\Lambda}{K}
\nonumber
\end{equation}
which can be solved using factorization method discussed above, and
\begin{equation}
\begin{aligned}
\widehat{Z}(\lambda)=\arg\,&\min_{Z}\; -\tracer{\Lambda}{Z}\\&\;\text{s.t.}\quad \|Z\|_{\max}\leq 1. 
\end{aligned}
\nonumber
\end{equation}
which is a soft thresholding; if $|\Lambda_{ij}|>1$ then, $\widehat{K}({\Lambda})_{ij}=-\sgn(\Lambda_{ij})$; otherwise $\widehat{K}({\Lambda})_{ij}=A_{ij}$. 

Using $\widehat{K}(\Lambda_k)$ and $\widehat{Z}(\Lambda_k)$, we update $\Lambda$ as follows
\begin{equation}
\Lambda_{k+1} = \Lambda_{k} - \frac{\tau}{\sqrt{k}}(\widehat{K}(\Lambda_k) - \widehat{Z}(\Lambda_k))
\nonumber
\end{equation}
until it converges. One criterion for the convergence of this method is to round both matrices $\widehat{K},\widehat{Z}$ and check if they are equal. To use this criterion, we need to initialize the two matrices very differently to avoid the stopping due to the initialization.

\begin{figure}[t]
\centering
\centering
\subfigure[\small$\|\text{Supp}(A-\hat{K})|$]{
\includegraphics[width=0.45\linewidth]{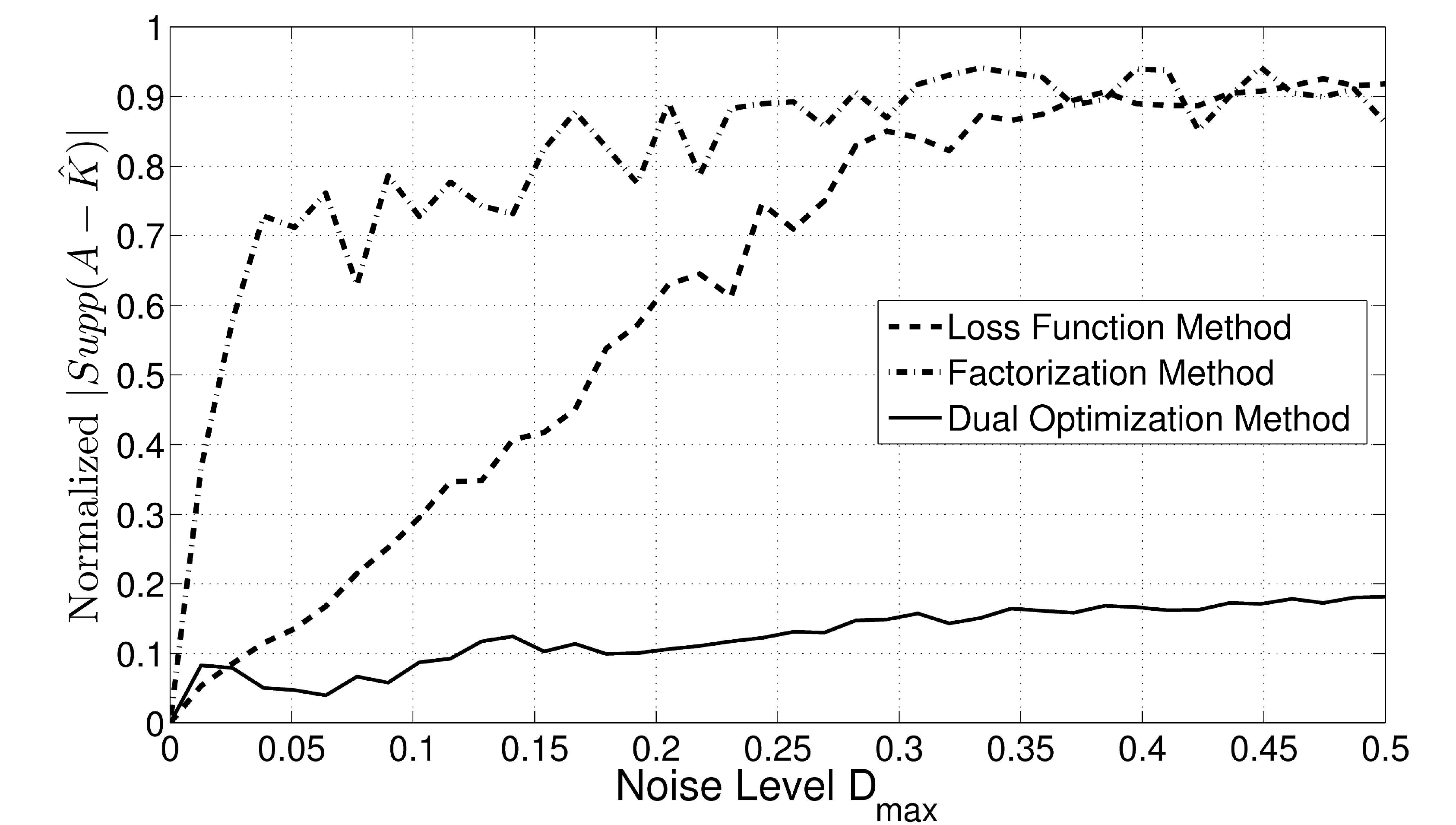}
\label{fig:opt-sparse}
}
\subfigure[\small$\|K^*-\hat{K}\|_1$]{
\includegraphics[width=0.45\linewidth]{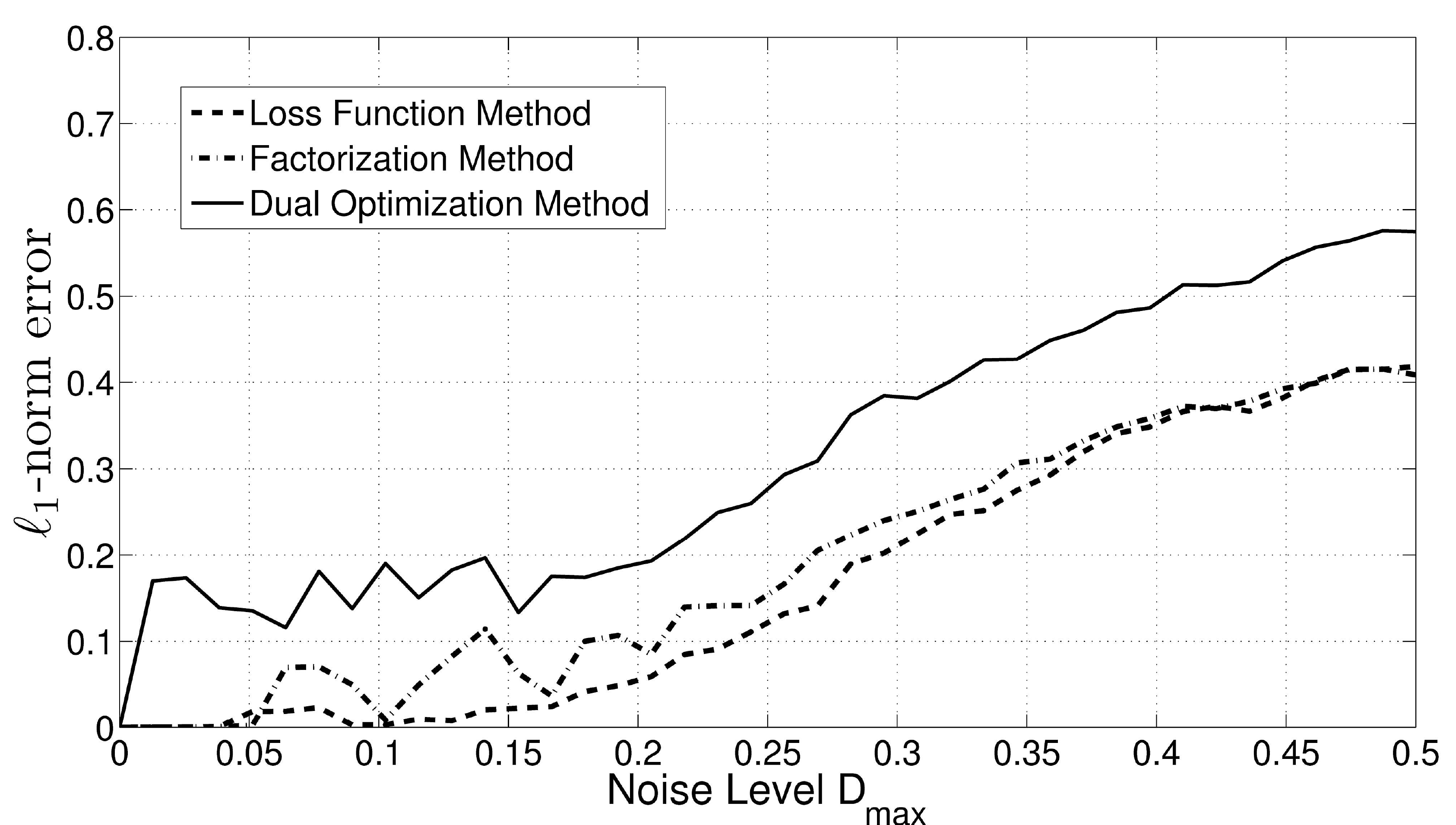}
\label{fig:opt-l1}
}
\caption{Comparison of the proposed numerical optimization methods in terms of the sparsity of the solution they provide and the $\ell_1$ error of the estimation.}
\label{fig:fig1}
\end{figure}

\begin{figure}[t]
\centering
\includegraphics[width=4.0in]{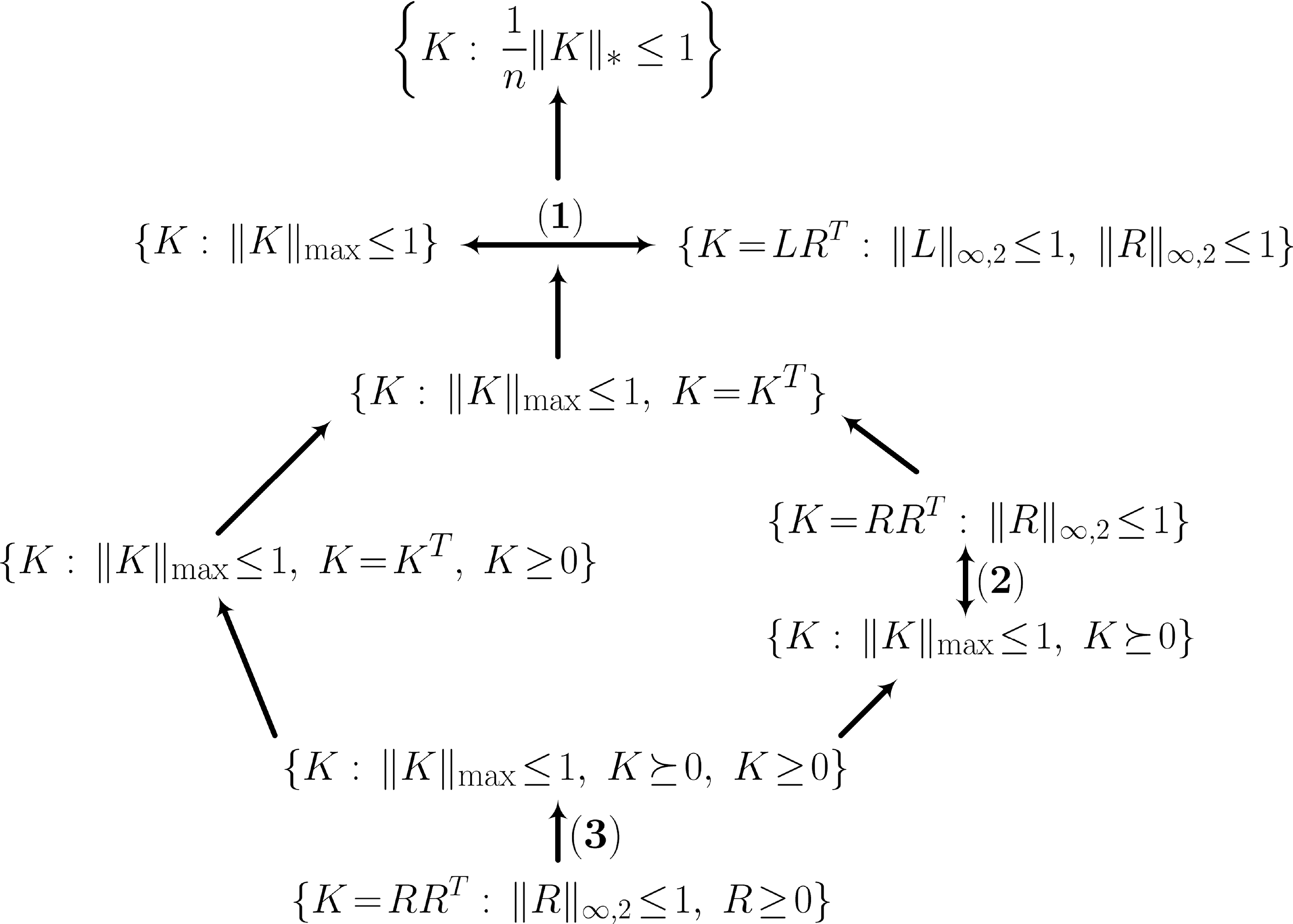}
\caption{Summary of possible convex relaxations of the set of valid clustering matrices and their relations. Here, $\|\cdot\|_*$ represents the trace (nuclear) norm, $\|\cdot\|_{\infty,2}$ represents the maximum $\ell_2$ norm of the rows, ``$\geq$" is used for element-wise positiveness and ``$\succeq$" is used for positive semi-definiteness. Each double-ended arrow represents the equivalence of two sets. Each single-ended arrow in this figure represents a \emph{strict} sub-set relation between two sets.}
\label{fig:conv-relax}
\end{figure}

\subsection{Numerical Comparison}
We compare the performance of these methods.
For three ideal clusters of size $20$ with noise level $D_{\max}$, we
run all three algorithms for $2000$ iterations. We consider an initial
step size $\tau=1$ for all methods, and, for the loss function method, we doubel $\lambda$ every $100$ iterations. For the dual method, we update $\Lambda$ for $20$ times and
run $100$ iterations of the factorization method for the
max-norm sub-problem at each update. We report the sparsity of the solution $A-\widehat{K}$ as well as the $\ell_1$-norm of the error $\|\widehat{K}-K^*\|_1$ for each algorithm in Fig~\ref{fig:fig1}. This result shows that there is a trade-off between sparsity and the error -- the dual optimization method provides consistently a sparse solution, where, factorization and loss function methods provide small error. The sparsity of loss function method gets worse as the noise increases.

\section{Tighter Relaxations}\label{sec:enhancedAlgo}
In this section, we improve our basic algorithm in two ways: first, we use a tighter relaxation for valid clustering constraint and second, we add a single-linkage step after we recovered the clustering matrix. Although max-norm is a tighter relaxation comparing to trace-norm, we would like to go further and introduce tighter relaxations. Figure~\ref{fig:conv-relax} summarizes different possible relaxations based on max-norm. The arrows in this figure indicated the strict subset relations among these relaxations. The tightest relaxation we suggest is $\{K=RR^T: \|R\|_{\infty,2}\leq 1, R\geq 0\}$ based on the intuition that a clustering matrix is symmetric and has a trivial factorization $R\in\real^{n\times k}$, where, $R_{ij}$ is non-zero if node $i$ belongs to cluster $j$. Next lemma formalizes this result. 

\begin{lemma}
All relaxation sets shown in Fig.~\ref{fig:conv-relax} are convex and the strict subset relations hold.
\label{lem:conv-relax}
\end{lemma}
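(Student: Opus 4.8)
The plan is to verify separately that each node of Figure~\ref{fig:conv-relax} is convex and that each single-ended arrow is a strict inclusion, using the double-ended (equivalence) arrows as the bridge between the factorization-based descriptions and the cone-based ones. Convexity of every factorization set is in fact uniform and elementary: if $K_1=R_1L_1^T$ and $K_2=R_2L_2^T$ with all rows of $R_1,R_2,L_1,L_2$ of $\ell_2$-norm at most $1$, then for $\lambda\in[0,1]$ the concatenated factors $\tilde R=[\sqrt{\lambda}R_1\ \ \sqrt{1-\lambda}R_2]$ and $\tilde L=[\sqrt{\lambda}L_1\ \ \sqrt{1-\lambda}L_2]$ satisfy $\lambda K_1+(1-\lambda)K_2=\tilde R\tilde L^T$ and $\|\tilde R_u\|_2^2=\lambda\|R_{1,u}\|_2^2+(1-\lambda)\|R_{2,u}\|_2^2\le 1$, and likewise for $\tilde L$; this argument respects the extra constraints $L=R$ (symmetry) and $R\ge 0$ (non-negativity), so it covers every factorization node at once. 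The remaining nodes are described by a norm bound or a semidefinite condition, and are convex as intersections of a convex cone (the whole space, the trace-norm ball, or the PSD cone) with affine constraints.

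Next I would cash in the equivalences. For the plain max-norm ball I use $\|K\|_{\max}=\min_{K=RL^T}\|R\|_{\infty,2}\|L\|_{\infty,2}$ and a rescaling of the two factors to show $\{\|K\|_{\max}\le 1\}$ equals the balanced factorization set $\{K=RL^T:\|R\|_{\infty,2},\|L\|_{\infty,2}\le 1\}$, and then invoke the SDP reformulation of Section~\ref{sec:SDP} to identify both with the set of $K$ admitting a positive semidefinite block completion whose two diagonal blocks have entries at most $1$. For the symmetric node $\{K=RR^T:\|R\|_{\infty,2}\le 1\}$ I observe that $K=RR^T$ makes $K$ the Gram matrix of the rows of $R$, so this set is exactly $\{K\succeq 0:K_{uu}\le 1\}$. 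For the tightest node I use the fact that the diagonal $K_{uu}=\|R_u\|_2^2$ is fixed by $K$ independently of the chosen factorization; hence $\{K=RR^T:R\ge 0,\ \|R\|_{\infty,2}\le 1\}$ is precisely the set of completely positive matrices with $K_{uu}\le 1$.

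With the nodes identified, the inclusions become easy. Containment along each arrow follows by specialization or by adding a constraint: taking $L=R$ embeds each symmetric factorization set into its asymmetric counterpart; imposing $R\ge 0$ (equivalently $K\ge 0$) or $K_{uu}\le 1$ only shrinks a set; complete positivity implies both $K\succeq 0$ and $K\ge 0$; and the inequality $\|K\|_*\le n\,\|K\|_{\max}$, which follows from $\|R\|_F\le\sqrt{n}\,\|R\|_{\infty,2}$ together with the min-factorization characterizations $\|K\|_*=\min_{K=RL^T}\|R\|_F\|L\|_F$ and the definition of $\|\cdot\|_{\max}$, places the max-norm ball inside the trace-norm ball of radius $n$. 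For strictness I would exhibit one explicit witness per arrow: a rank-one $K=vv^T$ with $v\in\{\pm1\}^n$ is symmetric with $\|K\|_{\max}\le 1$ yet has negative entries, separating the non-negativity-constrained nodes; a non-symmetric matrix of max-norm at most $1$ separates the symmetric from the asymmetric node; and a matrix with a single diagonal entry equal to $2$ has trace norm $2\le n$ but max-norm at least $2$, separating the trace-norm ball from the max-norm ball.

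The step I expect to be the main obstacle is the strictness of the innermost arrow, between the doubly-non-negative node $\{K\succeq 0,\ K\ge 0,\ K_{uu}\le 1\}$ and the completely-positive node. These two sets coincide for $n\le 4$, so a genuine strict inclusion requires $n\ge 5$ and the construction of a doubly-non-negative matrix that is not completely positive -- the Horn matrix is the canonical example -- normalized so that its diagonal still obeys $K_{uu}\le 1$. The delicate point is certifying non-membership in the completely positive cone, which is not a spectral or sign condition; I would rely on the known separation certificate for the Horn matrix and check that the normalization preserves both double-non-negativity and non-membership. Every other strictness witness is routine to verify by direct computation.
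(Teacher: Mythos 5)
Your proposal is correct in substance and is more self-contained than the paper's own proof, which leans on three citations: the equivalence of the max-norm ball with the bounded-row-norm factorization set is quoted from Lemma~15 of Srebro's thesis, convexity of the completely positive cone $\mathcal{CP}$ is quoted from Theorem~2.2 of Berman and Shaked-Monderer, and strictness of the innermost inclusion is quoted as a counterexample from the literature (the reference [GW80]). Where the paper proves its symmetric equivalence via the square root $R_0=\sqrt{K_0}$ and the observation that a diagonal entry exceeding $1$ forces $\|K_0\|_{\max}>1$, you give the same Gram-matrix/diagonal argument, so that part coincides. What you do differently: (i) you prove convexity of every factorization node at once by the $\sqrt{\lambda}$-weighted concatenation of factors, which is exactly the standard proof that $\mathcal{CP}$ is a convex cone, so you absorb the cited Theorem~2.2 into an elementary argument that also respects the constraints $L=R$ and $R\ge 0$; (ii) you supply explicit witnesses for the strictness of the easy arrows (a $\pm 1$ rank-one $vv^T$ against non-negativity, a non-symmetric matrix against the symmetric node, $\mathrm{diag}(2,0,\dots,0)$ against trace-norm versus max-norm), which the paper dismisses as trivial without exhibiting; and (iii) you correctly observe that the innermost strict inclusion can only hold for $n\ge 5$, since the doubly non-negative and completely positive cones coincide for $n\le 4$ --- a caveat the paper silently omits and under which its lemma, read literally for small $n$, would fail.

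One correction on that last step: the Horn matrix is \emph{not} an example of a doubly non-negative matrix that fails to be completely positive --- it has negative entries (and is not positive semi-definite); it is a \emph{copositive} matrix outside $\mathcal{PSD}+\mathcal{N}$, and therefore plays the role of the \emph{dual certificate}, not the witness. The witness must be a genuinely doubly non-negative $5\times 5$ matrix $A$ (e.g., the classical example of Hall, or the matrix in the paper's cited reference) whose non-membership in $\mathcal{CP}$ is certified by $\left\langle A, H\right\rangle<0$ with $H$ the Horn matrix, using the duality between the completely positive and copositive cones. Since $\mathcal{CP}$ is a cone, rescaling such an $A$ so that its diagonal is at most $1$ preserves both double non-negativity and non-membership, so your normalization step goes through; with the witness corrected, your argument is complete and reaches the paper's conclusion.
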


\begin{figure}[t]
\centering
\subfigure[Balanced; Fractional]{
\includegraphics[width=0.45\linewidth]{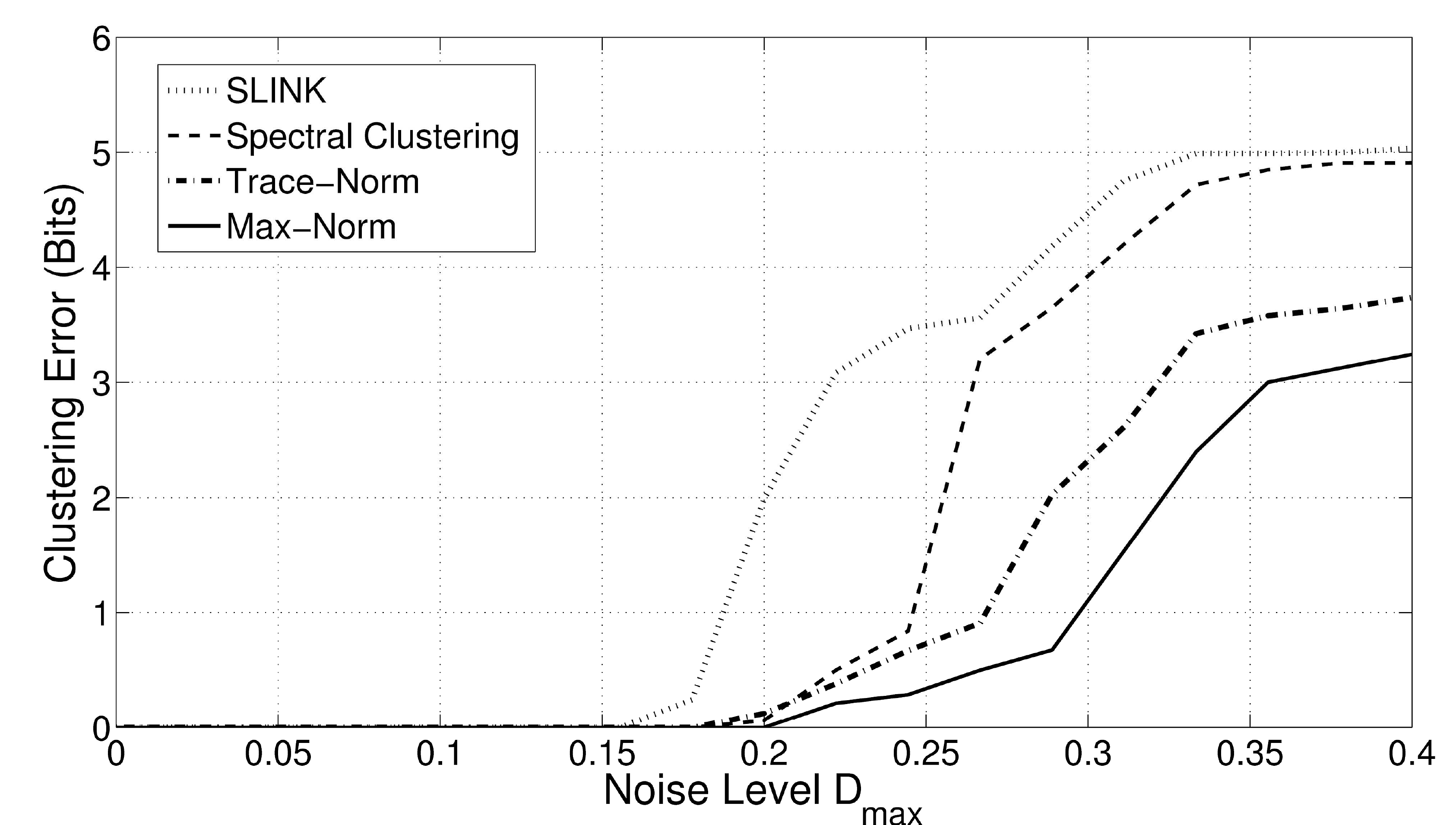}
\label{fig:subfig-bal-frac-entropy}
}
\subfigure[UnBalanced; Fractional]{
\includegraphics[width=0.45\linewidth]{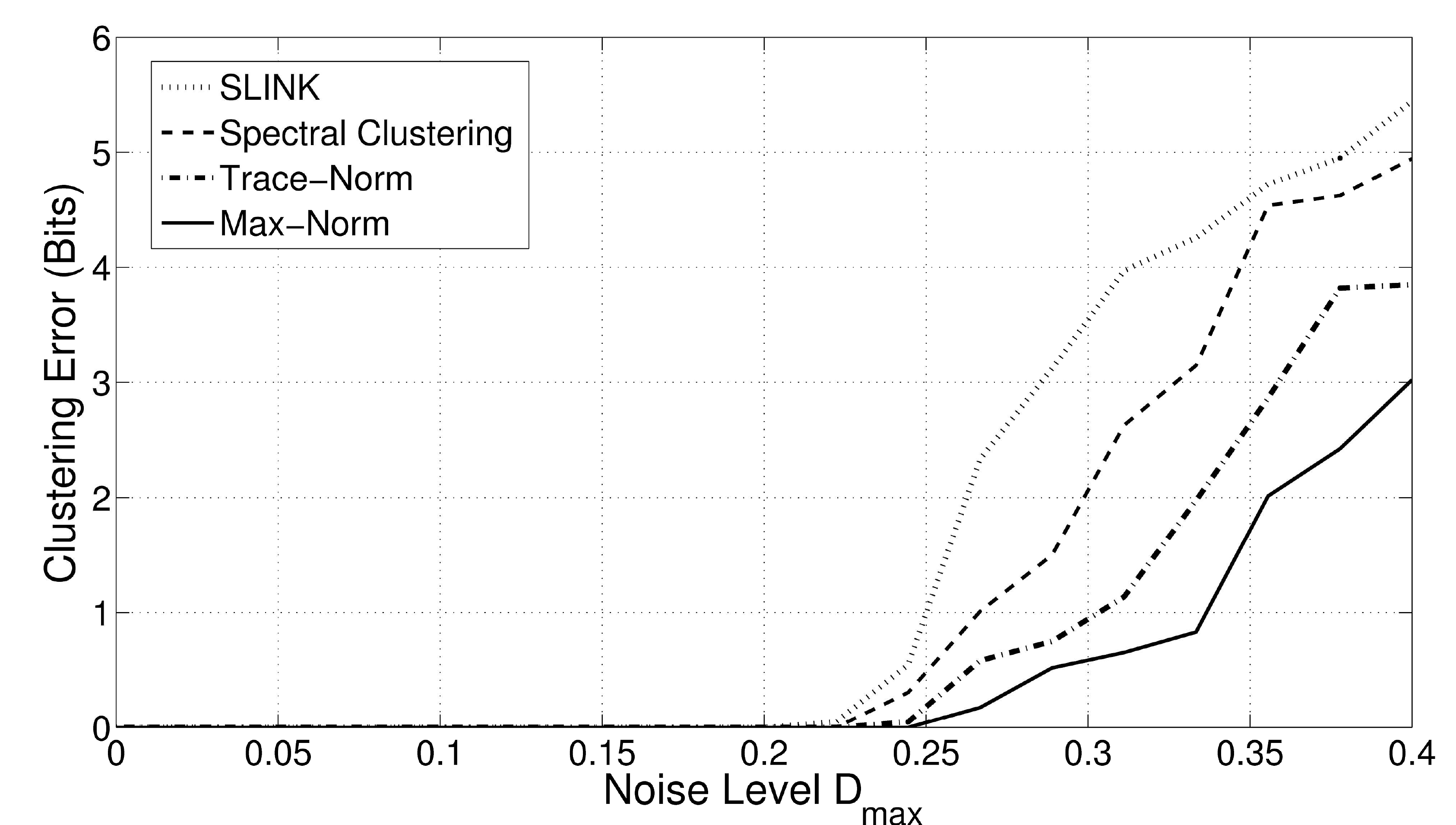}
\label{fig:subfig-unbal-frac-entropy}
}
\label{fig:fig3}
\caption{Comparison of our \emph{best} proposed method which is the linear objective over tight relaxation (followed by a single-linkage algorithm) with trace-norm counterpart, single-linkage algorithm and spectral clustering. Here, we plot the entropy-based distance of the recovered clustering with the underlying true clustering.}
\end{figure}

This suggests using the tightest convex relaxation, that is constraining to $K$ such that there exists $R>=0, \|R\|_{\infty,2}<=1$ with $K=RR^T$ (the set of matrices $K$ with a factorization $K=RR^T, R>=0$ is called the set of \emph{completely positive matrices} and is convex \cite{BERSHA}). We optimize over this relaxation by solving the following optimization problem over $R$:
\begin{equation}
\begin{aligned}
\widehat{R}\; =\;\arg\;&\min_{R}\;\|A-RR^T\|_1\\
&\,\text{s.t.}\quad\|R\|_{\infty,2}\leq 1\quad\&\quad R\geq 0.
\end{aligned}
\label{eq:relax-enhanced}
\end{equation}
and setting $\widehat{K}=\widehat{R}\widehat{R}^T$.  Although the constraint on $\widehat{K}$ {\em is} convex, the optimization problem \eqref{eq:relax-enhanced} is {\em not} convex in $R$.

\subsection{Single-linkage Post Processing}
The matrix $\widetilde{K}$ extracted from \eqref{eq:relax-enhanced} might diverge from a valid clustering matrix in two ways: firstly, it might not have the structure of a valid clustering and secondly, even if it has the structure, the values might not be integer. We run SLINK on $\widetilde{K}$ as a ``rounding scheme" to fix both of the above problems. SLINK gives a sequence of clusterings $\mathcal{C}_1,\ldots,\mathcal{C}_n$. To pick the best clustering, we choose
\begin{equation}
\widehat{K} = \arg\,\min_i \|A-K(\mathcal{C}_i)\|_1.
\label{eq:SLINK-Criterion}
\end{equation}
The matrix $\widetilde{K}$ can be viewed as a refined version of the affinity matrix $A$ and hence the second step of the algorithm can be replaced by other hierarchical clustering algorithms. The criterion of choosing the \emph{best} clustering in the hierarchy comes naturally from the correlation clustering formulation.

\subsection{Comparison with Other Algorithms}
We compare our enhanced algorithm with the trace-norm algorithm \cite{JCSX11} followed by SLINK and SLINK itself. In all cases we pick a clustering from SLINK hierarchy using \eqref{eq:SLINK-Criterion}. The setup is identical to the experiment explained in Section~\ref{sec:trace-comparison}. Fig~\ref{fig:fig2} summarizes the results and shows that our enhanced algorithm outperforms all competitive methods significantly.

Besides the exact recovery of the underlying clustering, we would like to investigate that as noise level $D_{\max}$ increases, how bad the output of our algorithm get. Using ``variation of information" \cite{MEI07} as a distance measure for clusterings, we compare our algorithm with linear objective with trace-norm counterpart, SLINK and spectral clustering\cite{LUX07} for both balanced and unbalanced clusterings described before. For the spectral clustering method, we first find the largest $k=4$ principal components of $A$  and then, run SLINK on principal components. Fig~\ref{fig:fig3} shows the result indicating that max-norm, even when the noise level is high and no method can recover the exact clustering, outputs a clustering that is not far from the true underlying clustering in our metric.

\begin{figure}[t]
\centering
\subfigure[\small Time Complexity]{
\includegraphics[width=0.48\linewidth]{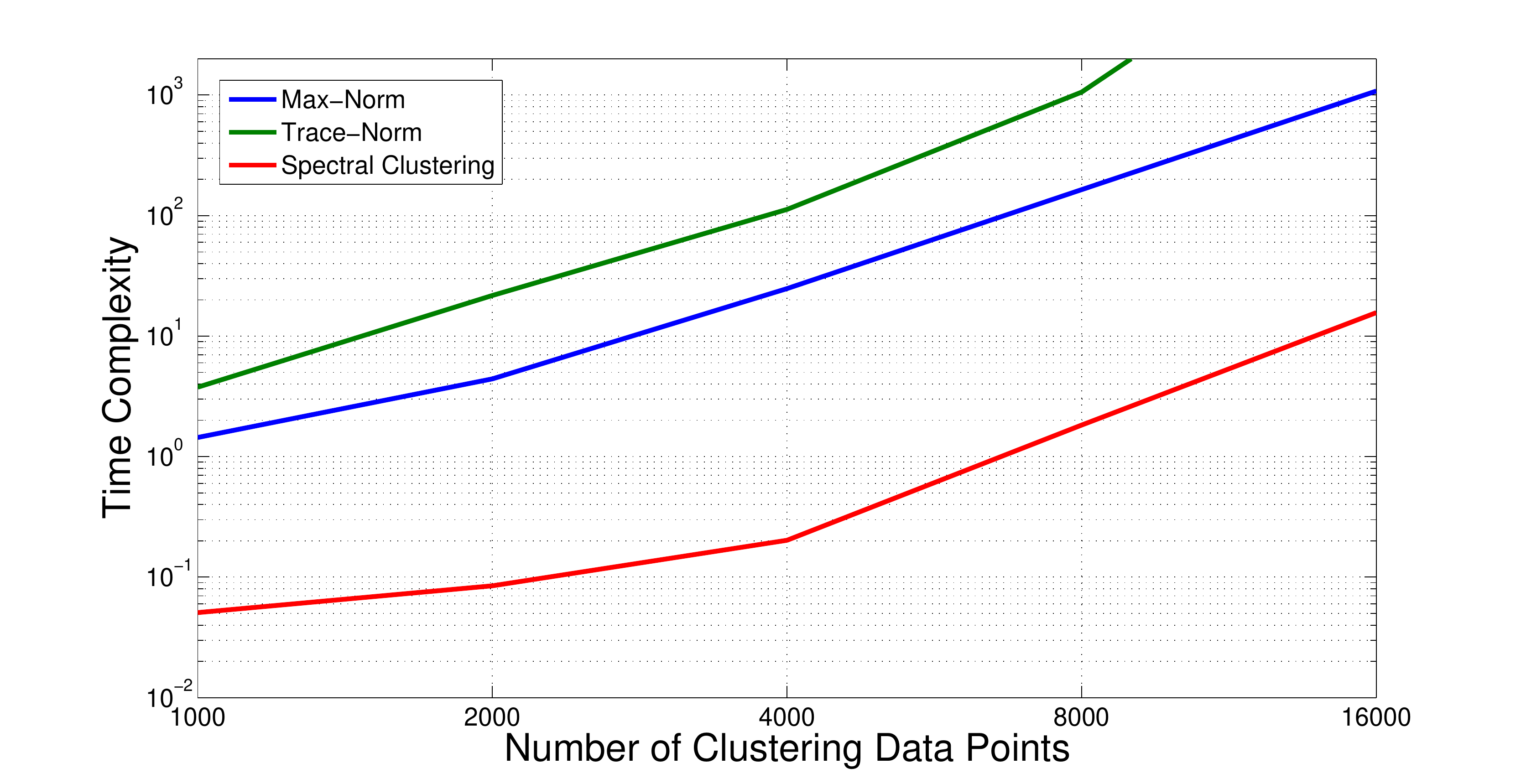}
\label{fig:subfig-MNIST-time}
}
\subfigure[\small Clustering Error]{
\includegraphics[width=0.48\linewidth]{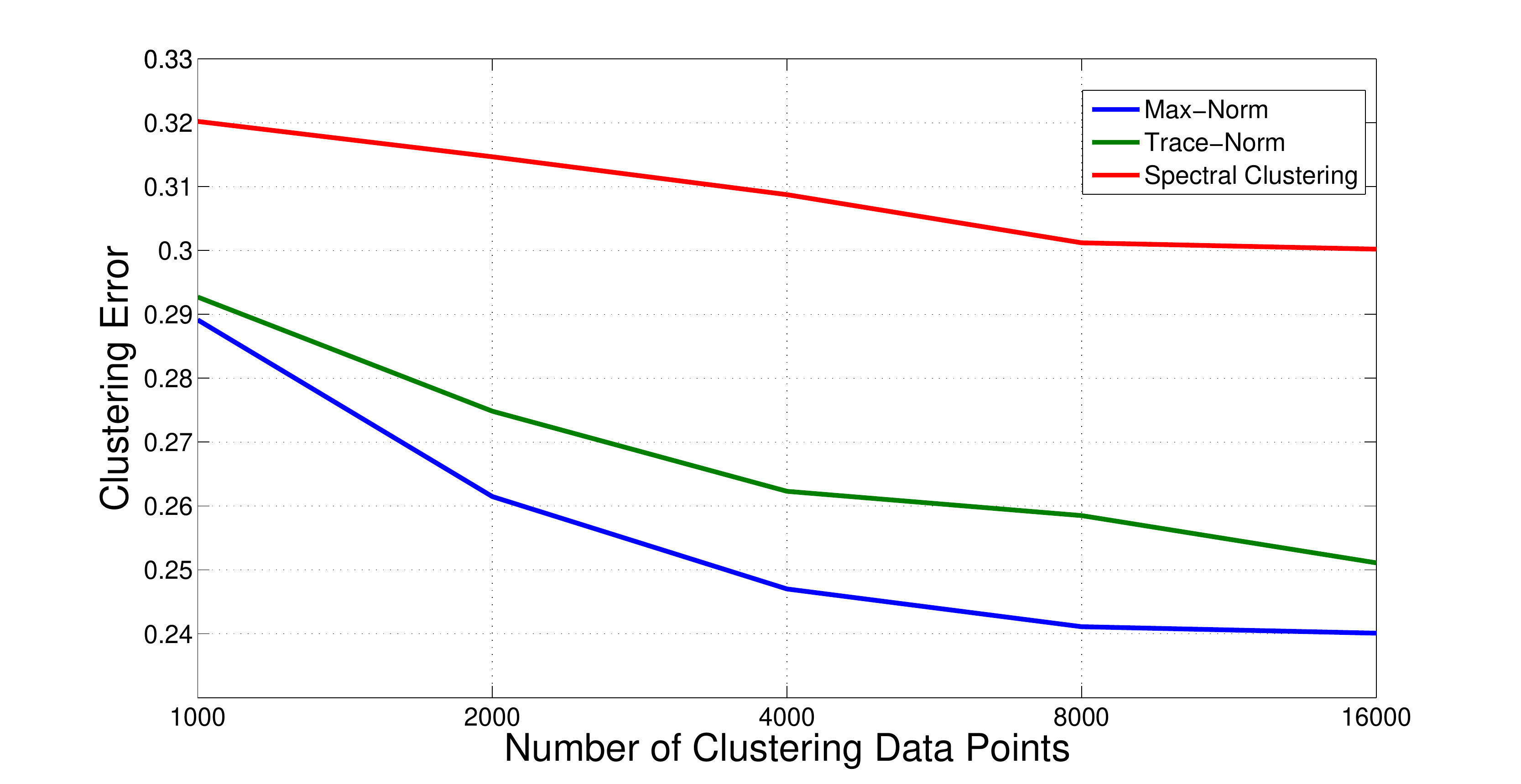}
\label{fig:subfig-MNIST-error}
}
\label{fig:MNIST}
\caption{\small Comparison of our \emph{best} proposed method which is the linear objective over tight relaxation (followed by $k$-means) with trace-norm and spectral clustering in terms of time complexity and clustering error on MNIST dataset.}
\end{figure}

\subsection{MNIST Dataset}

To demonstrate our method in a realistic and larger scale data set, we run our enhanced algorithm, trace-norm and spectral clustering on MNIST Dataset \cite{lecun1998gradient}. For each experiment, we pick a total of $n$ data points from $10$ different classes ($n/10$ from each class) and construct the affinities using Gaussian kernel as explained in \cite{buhler2009spectral}. We report the time complexities and clustering errors as previous experiment in Fig~\ref{fig:MNIST}. For the spectral clustering, we take SVD using Matlab and pick the top 10 principal components followed by $k$-means.

\bibliographystyle{natbib}
\bibliography{MaxNormClusteringarXiv}

\newpage
\appendix

\section{Proof of Lemma~\ref{lem:conv-relax}} \label{sec:prooflemmarelaxation}

Provided equivalences (1) and (2), it is clear that $\{K=LR^T:\|L\|_{\infty,2}\leq 1, \|R\|_{\infty,2}\leq 1\}$ and $\{K=RR^T:\|R\|_{\infty,2}\leq 1\}$ are both convex sets. Since $\{K=RR^T:\|R\|_{\infty,2}\leq 1, R\geq 0\}$ is the intersection of two sets $\{K=RR^T:\|R\|_{\infty,2}\leq 1\}$ and $\mathcal{CP}\{K=RR^T:R\geq 0\}$, it suffices to show that $\mathcal{CP}$ is a convex set. The set $\mathcal{CP}$ is called the set of \emph{completely positive matrices} and has been shown to be a closed convex cone (see Theorem 2.2 in \cite{BERSHA} for details).

For the proof of equivalence (1) see Lemma~15 in \cite{NatiPHD}. To prove equivalence (2), it is clear that $\{K=RR^T:\|R\|_{\infty,2}\leq 1\}\subseteq\{K:\|K\|_{\max}\leq 1, K\succeq 0\}$. Now, suppose $K_0\in\{K:\|K\|_{\max}\leq 1, K\succeq 0\}$; let $R_0=\sqrt{K_0}$ and in contrary, assume that $\|R_0\|_{\infty,2}>1$. This implies that at least one element on the diagonal of $K_0$ exceeds $1$ and hence $\|K_0\|_{\max}>1$. This is a contradiction and hence the equivalence (2) follows.

To show the relation (3), it suffices to show that the sub-set relation is strict, since the sub-set relation itself is trivial. By counter-example provided in \cite{GW80}, the sub-set relation is strict (i.e., there is a positive semi-definite and positive entry $K_0$ that does not belong to $\mathcal{CP}$).

\section{Proof of Lemma~\ref{lem:DmaxBound}} \label{sec:prooflemmadmaximum}
\begin{figure}[t]
\centering
\subfigure[Original Clustering]{
\includegraphics[width=2.5in]{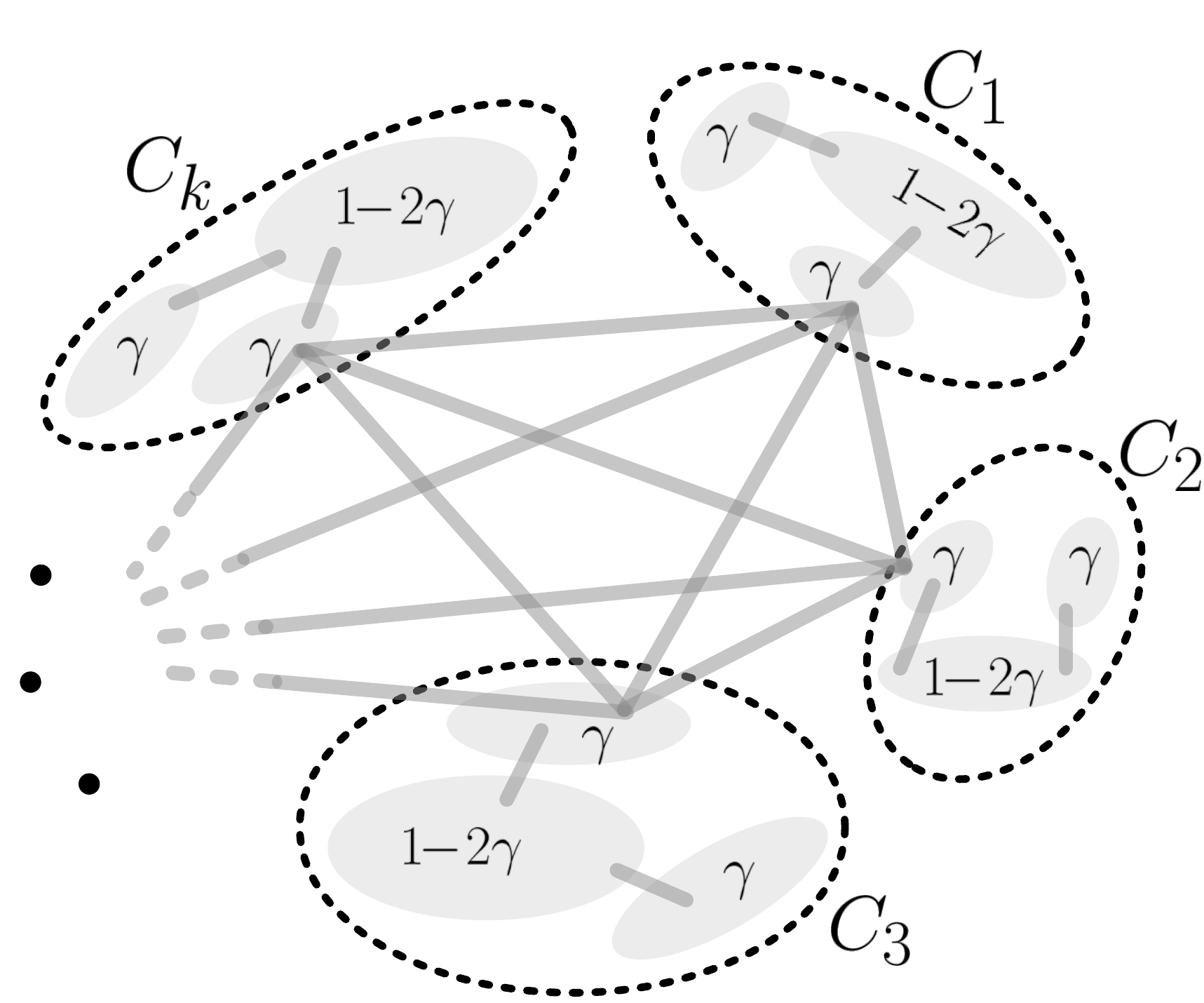}
\label{fig:DmaxLem1}
}
$\qquad\qquad$
\subfigure[Alternative Clustering]{
\includegraphics[width=2.5in]{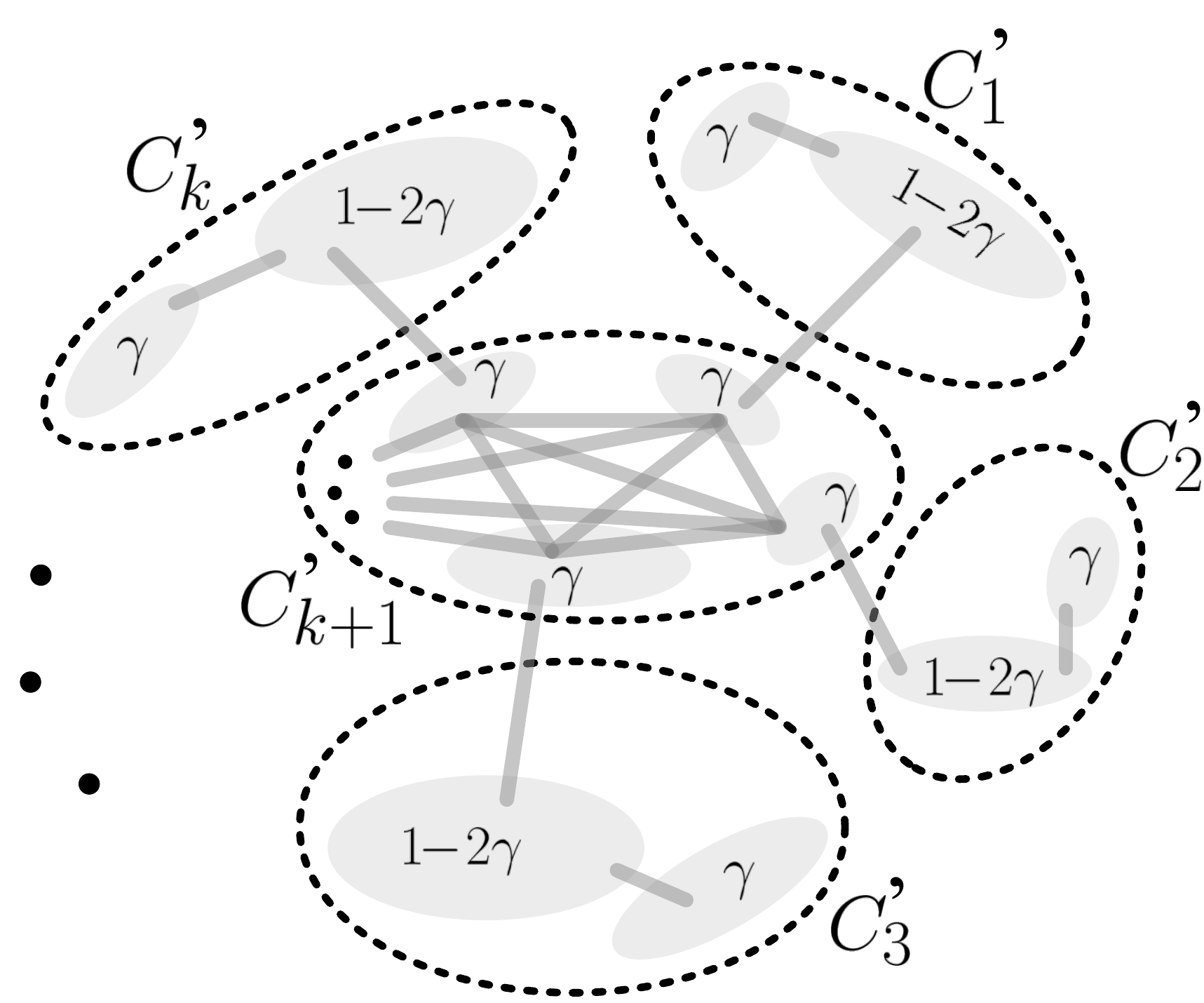}
\label{fig:DmaxLem2}
}
\label{fig:DmaxLem}
\caption{Illustration of two alternative clusterings on the same graph with $D_{\max}=\gamma$. Each gray cloud of points is a clique. Each link between two clouds of points connects every points on one cloud to every points on the other cloud.}
\end{figure}

We construct an example with $D_{\max}=\frac{2}{\frac{n^2}{\sum_i\, |C_i|^2}+5}$ that cannot be recovered. Consider the clustering shown in Fig.~\ref{fig:DmaxLem1}. It is clear that for this clustering, we have $D_{\max}=\gamma$ and
\begin{equation}
B(\mathcal{C}_1) = \gamma^2\sum_{i=1}^k |C_i|^2 + \frac{\gamma^2}{2}\sum_{i=1}^k |C_i|(n-|C_i|).
\nonumber
\end{equation}
Now, consider the alternative clustering shown in Fig.~\ref{fig:DmaxLem2}. For this alternative clustering, we have
\begin{equation}
B(\mathcal{C}_2) = \gamma(1-2\gamma)\sum_{i=1}^k |C_i|^2.
\nonumber
\end{equation}
It is clear that $B(\mathcal{C}_2)<B(\mathcal{C}_1)$ (the alternative is a better clustering) for $\gamma>\frac{2}{\frac{n^2}{\sum_i\, |C_i|^2}+5}$.\\

\section{Proof of Theorem~\ref{thm:deterministic}} \label{sec:prooftheorem1}
The proof has two main steps; in the first step, we characterize a sufficient optimality condition set based on the existence of a dual variable and in the second step, we construct such dual variable. For the sake of the proof, we consider a useful equivalent definition \cite{LS08} of the max norm as
\begin{equation}
\|K\|_{\max} = \max_{X:\|X\|_2\leq 1} \;\|K \circ X\|_{2}
\label{eq:var-max-norm}
\end{equation}
where, $\|\cdot\|_2$ is the spectral norm (maximum eigenvalue) of the matrix and ``$\,\circ\,$" is the Hadamard element-wise product.

\subsection{Notation}
In this section, we introduce our notation and definitions used throughout the paper.

\subsubsection{Residual Matrix Notations}
In general, we do not expect the residual matrix $B^*=A-K^*$ to be sparse unless we threshold the affinity matrix (or we have adjacency matrix). However, to provide a guarantee, we need to characterize the sub-gradient of the $\ell_1$-norm and hence distinguish between zeros and non-zeros of $B^*$. Let 
\begin{equation}
\Omega = \{B\in\real^{n\times n}: B = B^T, \text{\bf Supp}(B)\subseteq \text{\bf Supp}(B^*)\},
\end{equation}
where, $\text{\bf Supp}(\cdot)$ is the index set of non-zero entries. The orthogonal projection of a matrix $M$ to this space is defined to be a matrix of the same size with $\P_\Omega(M)_{ij} = M_{ij}$ if $(i,j)\in\text{\bf Supp}(B^*)$ and zero otherwise. The orthogonal complement of this space is denoted by $\Omega^\perp$ and the projection is defined as $\P_{\Omega^\perp}(M) = M -\P_\Omega(M)$.

\subsubsection{Clustering Matrix Notations}
Let $U\in\real^{n\times k^*}$ be constructed as
\begin{equation}
U=\left[\begin{tabular}{cccc} $\frac{1}{\sqrt{|C_1|}}\mathbf{1}_{|C_1|}$& & & \\ &$\frac{1}{\sqrt{|C_2|}}\mathbf{1}_{|C_2|}$ & & \\ & &$\;\cdot\;$ & \\ & & & $\frac{1}{\sqrt{|C_{k^*}|}}\mathbf{1}_{|C_{k^*}|}$\end{tabular}\right].
\end{equation}
Define $\T = \{UX^T + YU^T: X,Y\in\real^{n\times k^*}\}$ to be the space of matrices sharing either row or column space with $U$. The orthogonal projection to this space can be defined as
\begin{equation}
\P_\T(M) = UU^TM + MUU^T - UU^TMUU^T,
\nonumber
\end{equation}
where,
\begin{equation}
UU^T=\left[\begin{tabular}{cccc} $\frac{1}{|C_1|}\mathbf{1}_{|C_1|\times |C_1|}$& & & \\ &$\frac{1}{|C_2|}\mathbf{1}_{|C_2|\times |C_2|}$ & & \\ & &$\;\cdot\;$ & \\ & & & $\frac{1}{|C_{k^*}|}\mathbf{1}_{|C_{k^*}|\times |C_{k^*}|}$\end{tabular}\right].
\nonumber
\end{equation}
Denote the orthogonal complement of the space $\T$ by $\T^\perp$ equipped with projection $\P_{\T^\perp}(M) = M - \P_\T(M)$. Let $\alpha = 2D_{\max}$ be the contraction between the ideal clusters and disagreements (See Lemma~\ref{lem:space-separation} for more details on this definition). Under the assumption of the theorem, we have $\alpha<1$ and hence, $\T\cap\Omega=\{0\}$.\\

\noindent Using definitions in \eqref{eq:infinite_sums}, let 
\begin{equation}
X^* = W(Z^*) + V(UU^T),
\nonumber
\end{equation}
where,
\begin{equation}
Z^*=\P_\Omega\left[\begin{tabular}{cccc} $\frac{1}{|C_1|}\mathbf{1}_{|C_1|\times |C_1|}$& $\frac{1}{\sqrt{|C_1|\,|C_2|}}\mathbf{1}_{|C_1|\times |C_2|}$ & $\cdot$ & $\frac{1}{\sqrt{|C_1|\,|C_{k^*}|}}\mathbf{1}_{|C_1|\times |C_{k^*}|}$ \\ $\frac{1}{\sqrt{|C_2|\,|C_1|}}\mathbf{1}_{|C_2|\times |C_1|}$ &$\frac{1}{|C_2|}\mathbf{1}_{|C_2|\times |C_2|}$ & $\cdot$ & $\frac{1}{\sqrt{|C_2|\,|C_{k^*}|}}\mathbf{1}_{|C_2|\times |C_{k^*}|}$ \\ $\cdot$ & $\cdot$ &$\;\cdot\;$ & $\cdot$\\ $\frac{1}{\sqrt{|C_{k^*}|\,|C_1|}}\mathbf{1}_{|C_{k^*}|\times |C_1|}$ & $\frac{1}{\sqrt{|C_{k^*}|\,|C_2|}}\mathbf{1}_{|C_{k^*}|\times |C_2|}$ & $\cdot$ & $\frac{1}{|C_{k^*}|}\mathbf{1}_{|C_{k^*}|\times |C_{k^*}|}$\end{tabular}\right].
\nonumber
\end{equation}
Notice that $\P_\T(X^*) = UU^T$ and hence $X^*-UU^T\in\T^\perp$. If we show that $X^*-UU^T$ has spectral norm less than $1$, then it is immediate that $X^* \in \arg\; \max_{X:\|X\|_2\leq 1} \;\|K^* \circ X\|_{2}$. Also, we have an eigenvalue decomposition $K^* \circ X^* = [U\; V]\Sigma [U\; V]^T$, where, $U$ is as defined above and contains the eigenvector(s) corresponding to the maximum magnitude eigenvalue $+1$ (with $k^*$ repetitions). To bound the spectral norm of $X^*-UU^T$, consider
\begin{equation}
\begin{aligned}
\|X^*-UU^T\|_2&=\|W(\P_\Omega(Z^*-UU^T))\|_2\\
&\leq \frac{D_{\max}}{1-\alpha}(k^*-1)<1.
\end{aligned}
\nonumber
\end{equation} 
The first inequality follows from Lemma~\ref{lem:spec-norm-bound}. We make assumptions so that the last inequality holds.\\

\noindent We use the variational form \eqref{eq:var-max-norm} to characterize the sub-gradient of the max-norm at the point $K^*$. 

\begin{lemma}
For a matrix $M\in\real^{n\times n}$, we have $M\in\partial\|K^*\|_{\max}$ if $M = (USU^T + W) \circ X^*$, for some diagonal positive semi-definite matrix $S\in\real^{r\times r}$ with $\trace{S}=1$ and for some matrix $W\in\real^{n\times n}$ with $\P_\T(W)=0$ and $\|W\|_*<1$.
\label{lem:max-norm-subg}
\end{lemma}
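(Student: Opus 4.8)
The goal is to show that any $M$ of the stated form is a valid subgradient of $\|\cdot\|_{\max}$ at $K^*$. The natural vehicle is the variational formula \eqref{eq:var-max-norm}, $\|K\|_{\max} = \max_{\|X\|_2\leq 1}\|K\circ X\|_2$, which expresses the max-norm as a pointwise maximum of convex functions of $K$. I would invoke the standard rule for subdifferentials of a supremum: if $X^*$ attains the maximum defining $\|K^*\|_{\max}$, then the subdifferential of the inner function $K\mapsto \|K\circ X^*\|_2$ at $K^*$, pushed through the Hadamard product, is contained in $\partial\|K^*\|_{\max}$. So the plan splits into two pieces: first confirm that $X^*$ (constructed in the preamble) is an optimal $X$, and second compute a subgradient of the spectral norm composed with the Hadamard product.

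\textbf{Step 1: $X^*$ is optimal.} This is already laid out in the text: since $\P_\T(X^*)=UU^T$ and $X^*-UU^T\in\T^\perp$ has spectral norm bounded by $\frac{D_{\max}}{1-\alpha}(k^*-1)<1$ (via Lemma~\ref{lem:spec-norm-bound} and the theorem's hypotheses), one checks $\|X^*\|_2\le 1$ and that $K^*\circ X^*$ has its top eigenvalue $+1$ with multiplicity $k^*$, attained exactly on the column space of $U$. Thus $X^*\in\arg\max_{\|X\|_2\le 1}\|K^*\circ X\|_2$ and $\|K^*\|_{\max}=\|K^*\circ X^*\|_2$.

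\textbf{Step 2: subgradient of the spectral norm.} With the eigendecomposition $K^*\circ X^* = [U\,V]\Sigma[U\,V]^T$ in hand, I would use the fact that the subdifferential of the spectral norm of a symmetric matrix $N$ at $N$ consists of matrices $USU^T$ where $U$ spans the top eigenspace and $S\succeq 0$ is diagonal with $\trace{S}=1$ (convex combinations of rank-one projections onto unit top-eigenvectors). Translating through the Hadamard product, any element of $\partial_{K}\|K\circ X^*\|_2$ at $K^*$ has the form $G\circ X^*$ where $G$ is a spectral-norm subgradient of $K^*\circ X^*$. The maximal-eigenspace part gives the $USU^T$ term; the correction $W$ with $\P_\T(W)=0$ and $\|W\|_*<1$ accounts for the freedom in the subgradient coming from eigenvalues of $K^*\circ X^*$ that are smaller than $+1$ in magnitude, which contribute a term orthogonal to $\T$ of bounded dual (nuclear) norm. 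Assembling, $M=(USU^T+W)\circ X^*$ lies in $\partial\|K^*\|_{\max}$, which is the claim.

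\textbf{Main obstacle.} The delicate point is the chain rule for the supremum together with the Hadamard factor: one must justify that taking a subgradient of the inner spectral norm and then applying $\circ\,X^*$ genuinely lands inside $\partial\|K^*\|_{\max}$, rather than merely in the subdifferential of the single function $\|K\circ X^*\|_2$. This requires that $X^*$ be a maximizer (Step 1) so that Danskin-type reasoning applies, and care that the element-wise product is a fixed linear map in $K$ for fixed $X^*$, so its adjoint is again $\circ\,X^*$. The bookkeeping of which eigenvectors populate $U$ versus $V$, and the precise constraints $\trace{S}=1$, $\P_\T(W)=0$, $\|W\|_*<1$, is where the real content sits; the rest is a routine application of known subdifferential formulas for the spectral and nuclear norms.
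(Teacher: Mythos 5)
Your proposal follows the same route as the paper: the variational formula \eqref{eq:var-max-norm}, the sup-rule (a subgradient of an active function lies in the subdifferential of the supremum, which is the content of the paper's appeal to Theorem 4.4.2 of \cite{HIRLEM}), and then a computation of $\partial\bigl(\|K\circ X^*\|_2\bigr)$ at $K=K^*$. Your Step 1 and the observation that the adjoint of the linear map $K\mapsto K\circ X^*$ is again $\circ\,X^*$ are fine and match the paper's preamble. The genuine gap is in Step 2, and it is not bookkeeping: the claim that the correction $W$ ``accounts for the freedom in the subgradient coming from eigenvalues of $K^*\circ X^*$ that are smaller than $+1$ in magnitude'' is false. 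For a symmetric matrix $N=K^*\circ X^*$ whose largest-magnitude eigenvalue is $+1$ with eigenspace spanned by $U$, and all other eigenvalues strictly smaller in magnitude, the spectral-norm subdifferential is exactly $\{USU^T:\, S\succeq 0,\ \trace{S}=1\}$: a subgradient $Y$ must satisfy $\|Y\|_*\leq 1$ and $\tracer{Y}{N}=\|N\|_2$, and by von Neumann's trace inequality equality forces $Y$ to be supported on the top eigenspace. Sub-top eigenvalues contribute no freedom at all; the ``orthogonal-complement term of bounded norm'' you describe is the structure of the \emph{nuclear}-norm subdifferential, i.e.\ the dual situation, and you have the two reversed. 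Concretely, $\P_\T(W)=0$ means $U^TW=0$ and $WU=0$, so $USU^T$ and $W$ have orthogonal row and column spaces, whence $\|USU^T+W\|_*=1+\|W\|_*>1$ for every $W\neq 0$, violating the dual-norm constraint; moreover, writing the paper's eigendecomposition as $N=UU^T+V\Sigma_V V^T$, one has $\tracer{USU^T+W}{N}=1+\tracer{W}{V\Sigma_V V^T}\neq 1$ in general (take $W$ proportional to $V_1V_1^T$ for an eigenvector $V_1$ of $N$ with a nonzero sub-top eigenvalue), violating the alignment condition. So the assembled $M=(USU^T+W)\circ X^*$ lies in $\partial\|K^*\circ X^*\|_2$ only when $W=0$, and your argument proves the lemma only in that degenerate case.

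You should know that this is exactly the point where the paper's own proof is weakest: it asserts ``$M\in\partial\|K^*\circ X^*\|_2$ (which is the case)'' with no justification, and that assertion is open to the same objection. So your proposal reproduces the paper's approach, including its unproven kernel, but the justification you supply for that kernel is incorrect rather than merely missing. Closing the gap would require a different mechanism than the single active function $K\mapsto\|K\circ X^*\|_2$: for instance, verifying directly the two defining conditions of a max-norm subgradient, namely $\tracer{M}{K^*}=\|K^*\|_{\max}$ and that the norm of $M$ dual to $\|\cdot\|_{\max}$ is at most $1$, or exhibiting $M$ as a convex combination of subgradients arising from \emph{several} maximizers $X$ of \eqref{eq:var-max-norm}, not just $X^*$. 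Note in particular that the alignment condition alone already imposes the constraint $\tracer{W}{K^*\circ X^*}=0$ on $W$, which appears nowhere among the lemma's hypotheses --- a sign that the statement itself, not just its proof, needs additional conditions.
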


\begin{proof}
Using the variational form \eqref{eq:var-max-norm} and theorem 4.4.2 in \cite{HIRLEM} on the sub-gradient of the maximum of convex functions, we have
\begin{equation}
\partial\|K^*\circ X^*\|_2\, \subseteq\, \partial\|K^*\|_{\max}.
\nonumber
\end{equation}
Thus, it suffices to show that $M\in\partial\|K^*\circ X^*\|_2$ (which is the case).\\
\end{proof}

\subsection{Sufficient Optimality Conditions}
We provide similar optimality conditions to those provided in $\ell_1$ plus trace norm minimization in the literature. The main difference here is the existence of the auxiliary variable $X^*$ in the conditions. The following lemma characterizes a sufficient optimality condition set.

\begin{lemma} [Sufficient Optimality Condition.]
$K^*=\widehat{K}_\mu$ (Problem \eqref{eq:orig-optimization}$\,\equiv\,$ Problem \eqref{eq:equiv-relaxed-optimization}), if $\T\cap\Omega = \{0\}$ and there exists a dual matrix $Q$ such that
\begin{itemize}
\item [(a)] $\P_\Omega(Q\circ X^*) = -\frac{1-\mu}{n^2}\sgn(A-K^*)$
\item [(b)] $\left\|\P_{\Omega^\perp}(Q\circ X^*)\right\|_\infty < \frac{1-\mu}{n^2}$
\item [(c)] $\P_\T(Q) = USU^T$, for some diagonal matrix $S\succeq 0$ with $\text{Trace}(S)=\mu$. 
\item [(d)] $\left\|\P_{\T^\perp}(Q)\right\|_*<\mu$.
\end{itemize}
\label{lem:suff-opt}
\end{lemma}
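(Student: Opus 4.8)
The plan is to certify optimality of $K^*$ for the convex program \eqref{eq:equiv-relaxed-optimization} by showing that $0$ lies in the subdifferential of the objective at $K^*$, and then to promote this to uniqueness using the \emph{strict} inequalities (b) and (d) together with $\T\cap\Omega=\{0\}$. Write the objective as $f(K)=\frac{1-\mu}{n^2}\|A-K\|_1+\mu\|K\|_{\max}$, so that $\partial f(K^*)=\frac{1-\mu}{n^2}\,\partial\|A-K^*\|_1+\mu\,\partial\|K^*\|_{\max}$. First I would record the two ingredient subdifferentials. An element of $\partial\|A-\cdot\|_1$ at $K^*$ is any $H$ with $\P_\Omega(H)=-\sgn(A-K^*)$ on the support and $\|\P_{\Omega^\perp}(H)\|_\infty\le 1$ off it. For the max-norm, I would invoke Lemma~\ref{lem:max-norm-subg}: every matrix of the form $(USU^T+W)\circ X^*$ with $S\succeq 0$ diagonal, $\trace{S}=1$, $\P_\T(W)=0$ and $\|W\|_*<1$ belongs to $\partial\|K^*\|_{\max}$.

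The role of the dual matrix $Q$ is to supply a single object certifying both memberships so that the two weighted contributions to $\partial f(K^*)$ cancel. Set $G:=Q\circ X^*$. Conditions (c)--(d) say $\P_\T(Q)=USU^T$ with $\trace{S}=\mu$ and $\|\P_{\T^\perp}(Q)\|_*<\mu$; rescaling by $\tfrac1\mu$ puts $\tfrac1\mu Q=U(\tfrac1\mu S)U^T+\tfrac1\mu\P_{\T^\perp}(Q)$ into exactly the hypotheses of Lemma~\ref{lem:max-norm-subg}, so $G$ is $\mu$ times a max-norm subgradient at $K^*$. At the same time, conditions (a)--(b) state precisely that $G$ matches $-\sgn(A-K^*)$ on $\Omega$ after scaling and is strictly bounded in $\|\cdot\|_\infty$ off $\Omega$, identifying $G$ as $\tfrac{1-\mu}{n^2}$ times an $\ell_1$ subgradient. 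These two certified subgradients are arranged (with the signs built into (a) and (c)) so that their $f$-weighted sum is $G-G=0$; hence $0\in\partial f(K^*)$ and $K^*$ globally minimizes \eqref{eq:equiv-relaxed-optimization}.

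For uniqueness I would take any optimal $\hat K$, put $\Delta=\hat K-K^*$, and \emph{sharpen} each subgradient inequality instead of merely using $0\in\partial f$. For the $\ell_1$ term, since $B^*=A-K^*$ vanishes on $\Omega^\perp$, one gets $\|A-\hat K\|_1-\|A-K^*\|_1\ \ge\ \tr{-\sgn(A-K^*)}{\P_\Omega\Delta}+\|\P_{\Omega^\perp}\Delta\|_1$. For the max-norm term I would choose, within the subdifferential, the free component $W\in\T^\perp$ that maximizes the pairing $\tr{W\circ X^*}{\Delta}$, yielding a positive multiple of a norm of $\P_{\T^\perp}(\Delta)$. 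Substituting the certificate and using that (b) and (d) hold with \emph{strict} inequality, the terms aligned with $Q$ cancel and what remains is $f(\hat K)-f(K^*)\ge c_1\|\P_{\Omega^\perp}\Delta\|_1+c_2\,\|\P_{\T^\perp}\Delta\|>0$ with $c_1,c_2>0$, unless $\P_{\Omega^\perp}\Delta=0$ and $\P_{\T^\perp}\Delta=0$; but then $\Delta\in\Omega\cap\T=\{0\}$, so $\hat K=K^*$. The equivalence with \eqref{eq:orig-optimization} then follows because $K^*=K(\mathcal C^*)$ is valid and, for binary $A$, the relaxed objective restricted to valid matrices is an increasing affine function of $D(\cdot)$ (as $\|K\|_{\max}=1$ there); optimality of $K^*$ over the larger max-norm ball therefore implies optimality over the combinatorial feasible set.

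The hard part will be the max-norm step of the uniqueness argument: unlike the trace-norm case, the subdifferential is expressed through the variational form \eqref{eq:var-max-norm} and the Hadamard product with $X^*$, so I must correctly evaluate $\sup_{\|W\|_*\le1,\ \P_\T(W)=0}\tr{W\circ X^*}{\Delta}$ as a genuine positive norm of $\P_{\T^\perp}(\Delta)$ while keeping the maximizing $W$ inside $\partial\|K^*\|_{\max}$. Verifying that this pairing, intertwined with the fixed factor $X^*$ and the projections $\P_\T,\P_{\T^\perp}$, produces a strictly positive penalty on $\P_{\T^\perp}(\Delta)$ is the technical crux on which the strict gap in (d) must act.
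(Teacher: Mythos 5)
Your architecture is the same as the paper's own (very terse) proof: conditions (a)--(b) identify $G:=Q\circ X^*$ with a scaled element of the $\ell_1$ subdifferential, conditions (c)--(d) plus Lemma~\ref{lem:max-norm-subg} identify $\tfrac{1}{\mu}G$ with an element of $\partial\|K^*\|_{\max}$, and first-order optimality is supposed to finish; you go further by attempting uniqueness, which the paper never addresses. But your central step fails as written. By your own (correct) identifications, $G$ lies in $\tfrac{1-\mu}{n^2}\,\partial_K\|A-K^*\|_1$ (the minus sign in (a) exactly matches the $-\sgn(A-K^*)$ of the subdifferential taken with respect to $K$), and $G$ also lies in $\mu\,\partial\|K^*\|_{\max}$. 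Both weights are positive, so the element of $\partial f(K^*)$ you have actually exhibited, where $f(K)=\tfrac{1-\mu}{n^2}\|A-K\|_1+\mu\|K\|_{\max}$, is $G+G=2G$, not $G-G=0$. To get the cancellation you need $-G$ in one of the two weighted subdifferentials, and it is in neither: any subgradient $M$ of a norm at $K^*\neq 0$ satisfies $\tr{M}{K^*}=\|K^*\|_{\max}=1$, and indeed $\tr{\tfrac{1}{\mu}G}{K^*}=\tfrac{1}{\mu}\trace{S}=1$, so $-\tfrac{1}{\mu}G$ cannot be a max-norm subgradient; and $-G\in\tfrac{1-\mu}{n^2}\partial_K\|A-K^*\|_1$ would force $\P_\Omega(Q\circ X^*)=+\tfrac{1-\mu}{n^2}\sgn(A-K^*)$, the opposite of hypothesis (a). So the phrase ``the signs built into (a) and (c)'' conceals exactly the point at issue: with the signs as stated, the two certified subgradients reinforce rather than cancel. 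A correct write-up must either flip the sign in (a) (reading it, together with the matching sign in the dual-variable construction later in the paper, as a consistent slip) and then carry out the cancellation explicitly, or explain why the stated hypotheses still imply optimality; asserting $G-G=0$ does neither, and this is the step the whole lemma rests on.

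The uniqueness half is a plan rather than a proof, and the crux you defer is genuinely problematic, not just technical. The quantity you need, $\sup\{\tr{W\circ X^*}{\Delta}\,:\,\P_\T(W)=0,\ \|W\|_*<1\}$, equals $\sup\{\tr{W}{X^*\circ\Delta}\,:\,\P_\T(W)=0,\ \|W\|_*<1\}$, which is a dual norm of $\P_{\T^\perp}(X^*\circ\Delta)$ --- a function of $X^*\circ\Delta$, not of $\Delta$. Because the Hadamard factor $X^*$ does not commute with $\P_\T$, forcing this term to vanish only gives $X^*\circ\Delta\in\T$, not $\P_{\T^\perp}(\Delta)=0$, so your closing step ``$\Delta\in\T\cap\Omega=\{0\}$, hence $\hat K=K^*$'' does not follow without an additional argument converting $\Delta\in\Omega$ and $X^*\circ\Delta\in\T$ into $\Delta=0$. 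Since the lemma asserts $K^*=\widehat{K}_\mu$ with $\widehat{K}_\mu$ the (unique) minimizer, and Theorem~\ref{thm:deterministic} explicitly relies on uniqueness, this is a real hole; your final remark on the equivalence with \eqref{eq:orig-optimization} (valid matrices have max-norm $1$, so $f$ restricted to them is affine in $D$) is fine, but it is the easy part.
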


\begin{proof}
Notice that since $X^*$ by construction has no zero entry (except for the very corner case where there are only two clusters both of size $2$), the matrix $Q\circ X^*$ can take any value/sign on each entry by choosing the values of $Q$ properly. Under these conditions, $Q\circ X^*\,\in\partial\|A-K^*\|_1$ and also $Q\circ X^*\,\in\partial\|K^*\|_{\max}$ and the result follows from the standard first order optimality argument and zero duality gap of both $\ell_1$ and max norms.\\
\end{proof}

\subsection{Dual Variable Construction}
First notice that under the assumption of the theorem, we have $\alpha<1$ and hence, by Lemma~\ref{lem:space-separation}, we have $\T\cap\Omega=\{0\}$ and also $\mu=\mu_0$ is feasible. Second, we construct $Q$ by using alternating projections. Consider the infinite sums
\begin{equation}
\begin{aligned}
W(M) &= M - \P_\T(M) + \P_\Omega(\P_\T(M)) - \P_\T(\P_\Omega(\P_\T(M))) +\ldots\\
V(N) &= N\, - \P_\Omega(N)\, + \P_\T(\P_\Omega(N))\, - \P_\Omega(\P_\T(\P_\Omega(N)))\, + \ldots
\end{aligned}
\label{eq:infinite_sums}
\end{equation}
By the proof of the  Lemma~\ref{lem:space-separation}, these sums converge geometrically with parameter $\alpha$ (See Lemma 5 in \cite{JCSX11} for the proof). Denoting element-wise division with ``$/$" (and $\frac{0}{0}=0$), let
\begin{equation}
Q = -\frac{1-\mu}{n^2}W(\sgn(A-K^*)/X^*) + \frac{\mu}{k^*}V(UU^T).
\nonumber
\end{equation}
It is easy to check that conditions (a) and (c) in lemma~\ref{lem:suff-opt} are both satisfied for $S=\frac{1}{k^*}\mathbf{I}$. To show condition (b), first notice that $\|\P_{\Omega^\perp}\P_{\T}\P_{\Omega}(M)\|_\infty\leq D_{\max}\|\P_{\Omega^\perp}(M)\|_\infty$ and hence, we have
\begin{equation}
\begin{aligned}
\|\P_{\Omega^\perp}(Q\circ X^*)\|_\infty&\leq\frac{1}{(1-D_{\max})^2} \left\|\P_{\Omega^\perp}\left(\frac{1-\mu}{n^2}\P_\T(\sgn(A-K^*)/X^*) + \frac{\mu}{k^*}UU^T\right)\circ \P_{\Omega^\perp}\left(UU^T-\P_\T(Z^*)\right)\right\|_\infty\\
& = \max_i\frac{1}{(1-D_{\max})^2}\left(\frac{(1-\mu)|C_i|}{n^2}D_{\max} +\frac{\mu}{k^*}\frac{1}{|C_i|}\right)\frac{1+D_{\max}}{|C_{i}|}\\
& = \frac{1}{(1-D_{\max})^2}\left(\frac{(1-\mu)}{n^2}(1+D_{\max})D_{\max} +\frac{\mu}{k^*}\frac{1+D_{\max}}{|C_{\min}|^2}\right) < \frac{1-\mu}{n^2}.
\end{aligned}
\nonumber
\end{equation}
The last inequality holds for $\frac{(1-\mu)k^*}{\mu n^2} >\frac{(1+D_{\max})}{(1-3D_{\max})|C_{\min}|^2}$. For the condition (d), we have

\footnotesize\begin{equation}
\begin{aligned}
&\|\P_{\T^\perp}(Q)\|_*\leq\frac{1}{1-\alpha}\left\|\P_{\T^\perp}\left(\frac{1-\mu}{n^2}\sgn(A-K^*)/X^* + \frac{\mu}{k^*}\P_\Omega(UU^T)\right)\right\|_*\\ &\leq\frac{1}{1-\alpha}\left\|\P_\Omega\left[ \begin{tabular}{cccc}$\left(\frac{\mu}{k^*}\frac{1}{|C_1|}-\frac{1-\mu}{n}\frac{|C_1|}{n}\right)\mathbf{1}_{|C_1|\times|C_1|}$& $-\frac{1-\mu}{n}\frac{\sqrt{|C_1|\,|C_2|}}{n}\mathbf{1}_{|C_1|\times|C_2|}$ &$\cdot$& $-\frac{1-\mu}{n}\frac{\sqrt{|C_1|\,|C_{k^*}|}}{n}\mathbf{1}_{|C_1|\times|C_{k^*}|}$ \\$-\frac{1-\mu}{n}\frac{\sqrt{|C_2|\,|C_1|}}{n}\mathbf{1}_{|C_2|\times|C_1|}$ &$\left(\frac{\mu}{k^*}\frac{1}{|C_2|}-\frac{1-\mu}{n}\frac{|C_2|}{n}\right)\mathbf{1}_{|C_2|\times|C_2|}$& $\cdot$& $-\frac{1-\mu}{n}\frac{\sqrt{|C_2|\,|C_{k^*}|}}{n}\mathbf{1}_{|C_2|\times|C_{k^*}|}$\\ $\cdot$&$\cdot$&$\cdot$&$\cdot$\\ $-\frac{1-\mu}{n}\frac{\sqrt{|C_{k^*}|\,|C_1|}}{n}\mathbf{1}_{|C_{k^*}|\times|C_1|}$ &$-\frac{1-\mu}{n}\frac{\sqrt{|C_{k^*}|\,|C_2|}}{n}\mathbf{1}_{|C_{k^*}|\times|C_2|}$& $\cdot$& $\left(\frac{\mu}{k^*}\frac{1}{|C_{k^*}|}-\frac{1-\mu}{n}\frac{|C_{k^*}|}{n}\right)\mathbf{1}_{|C_{k^*}|\times|C_{k^*}|}$\end{tabular} \right]\right\|_*\\ &\leq\frac{D_{\max}}{1-\alpha}\frac{1-\mu}{n}\left\|\left[ \begin{tabular}{cccc}$\frac{|C_1|}{n}\mathbf{1}_{|C_1|\times|C_1|}$& $\frac{\sqrt{|C_1|\,|C_2|}}{n}\mathbf{1}_{|C_1|\times|C_2|}$ &$\cdot$& $\frac{\sqrt{|C_1|\,|C_{k^*}|}}{n}\mathbf{1}_{|C_1|\times|C_{k^*}|}$ \\$\frac{\sqrt{|C_2|\,|C_1|}}{n}\mathbf{1}_{|C_2|\times|C_1|}$ &$\frac{|C_2|}{n}\mathbf{1}_{|C_2|\times|C_2|}$& $\cdot$& $\frac{\sqrt{|C_2|\,|C_{k^*}|}}{n}\mathbf{1}_{|C_2|\times|C_{k^*}|}$\\ $\cdot$&$\cdot$&$\cdot$&$\cdot$\\ $\frac{\sqrt{|C_{k^*}|\,|C_1|}}{n}\mathbf{1}_{|C_{k^*}|\times|C_1|}$ &$\frac{\sqrt{|C_{k^*}|\,|C_2|}}{n}\mathbf{1}_{|C_{k^*}|\times|C_2|}$& $\cdot$& $\frac{|C_{k^*}|}{n}\mathbf{1}_{|C_{k^*}|\times|C_{k^*}|}$\end{tabular} \right]\right\|_*\\ &\qquad\qquad\qquad\qquad+\frac{D_{\max}}{1-\alpha}\left\|\left[ \begin{tabular}{cccc}$\frac{\mu}{k^*}\frac{1}{|C_1|}\mathbf{1}_{|C_1|\times|C_1|}$& $\mathbf{0}$ &$\cdot$& $\mathbf{0}$ \\$\mathbf{0}$ &$\frac{\mu}{k^*}\frac{1}{|C_2|}\mathbf{1}_{|C_2|\times|C_2|}$& $\cdot$& $\mathbf{0}$\\ $\cdot$&$\cdot$&$\cdot$&$\cdot$\\ $\mathbf{0}$ &$\mathbf{0}$& $\cdot$& $\frac{\mu}{k^*}\frac{1}{|C_{k^*}|}\mathbf{1}_{|C_{k^*}|\times|C_{k^*}|}$\end{tabular} \right]\right\|_*\\ &= \frac{D_{\max}}{1-\alpha}\left(\frac{1-\mu}{n}\frac{\sum_i|C_i|^2}{n} + \mu\right) < \mu.
\end{aligned}
\nonumber
\end{equation}\normalsize
The last inequality holds for $\frac{(1-\mu)k^*}{\mu n^2}<\frac{(1-\alpha-D_{\max})k^*}{D_{\max}\sum_i|C_i|^2}$ as assumed.\\

\begin{lemma}
If $\alpha<1$ then $\T\cap\Omega=\{0\}$.
\label{lem:space-separation}
\end{lemma}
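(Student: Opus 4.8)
The goal is to show that the two subspaces $\T$ and $\Omega$ meet only at the origin as soon as $\alpha=2D_{\max}<1$. The plan is to reduce this to a strict-contraction estimate for the composition of the two orthogonal projections. Indeed, if $M\in\T\cap\Omega$ then $\P_\T(M)=M$ and $\P_\Omega(M)=M$, so $M=\P_\Omega\P_\T\P_\Omega(M)$; hence it suffices to prove that $\|\P_\Omega\P_\T\P_\Omega(M)\|_\infty\le\alpha\,\|M\|_\infty$ for every $M$, since then $\|M\|_\infty\le\alpha\|M\|_\infty$ forces $M=0$ whenever $\alpha<1$. This is also precisely the estimate that drives the geometric convergence (with ratio $\alpha$) of the alternating-projection series $W(\cdot)$ and $V(\cdot)$ used later, so proving it here does double duty.

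First I would make the contraction estimate explicit using $\P_\T(M)=UU^TM+MUU^T-UU^TMUU^T$. The matrix $P:=UU^T$ is the block-averaging projector onto the cluster-indicator span, so each of $PM$, $MP$, $PMP$ replaces an entry by an average of $M$ over a cluster (or over a block). The point is that an $M\in\Omega$ is supported only on disagreements, and by definition of $D_{\max}$ any node $u$ has at most $D_{\max}|C_i^*|$ disagreements with any cluster $C_i^*$; consequently each cluster-averaging operation against an $\Omega$-matrix shrinks the sup-norm by a factor $D_{\max}$. The delicate part, and the main obstacle, is the bookkeeping that converts the three terms into the clean factor $2D_{\max}=\alpha$ rather than the crude $3D_{\max}$ obtained by bounding each term separately: one has to exploit that on the disagreement support the double average $PMP$ partly coincides with the single averages $PM$ and $MP$, so only two independent averaging gains survive.

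Because pinning that constant by pure norm bounds is fragile, I would back it up with a structural argument that yields $\alpha=2D_{\max}<1$ transparently. Membership $M\in\T$ is equivalent to $(I-P)M(I-P)=0$, i.e.\ to the additive form $M_{uv}=\phi_{c(u)}(v)+\psi_{c(v)}(u)$, where $c(u)$ denotes the cluster of $u$. For $M\in\Omega$ this function must vanish on every agreement pair. Now $\alpha<1$ means $D_{\max}<\tfrac12$, so inside each cluster every node is joined to more than half of its cluster by agreement edges, whence the within-cluster agreement graph is connected and non-bipartite; and across any two clusters the bipartite agreement (non-edge) graph again has more-than-half minimum degree on each side and is therefore connected. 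Feeding the equations $\phi_{c(u)}(v)+\psi_{c(v)}(u)=0$ on agreements, together with the symmetry $M=M^T$, into these dense connected graphs forces the potentials $\phi,\psi$ to be constant on each block, the non-bipartiteness being exactly what removes the residual two-coloring freedom within a cluster. The constants then cancel, so $M$ vanishes on every diagonal and off-diagonal block, giving $M=0$. The only thing to handle separately is the degenerate small-cluster case (e.g.\ two clusters of size two), precisely the corner case flagged earlier, where the degree counts are too weak to guarantee connectivity.
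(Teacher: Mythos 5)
Your proposal and the paper's proof share the same starting reduction, but your actual argument is a different route. The paper carries out precisely your first plan and nothing else: for $M\in\Omega$ it bounds each entry of $\P_\T(M)$ (column average plus row average minus block average, each average involving at most a $D_{\max}$-fraction of nonzero entries of $M$) to get $\|\P_\T(M)\|_\infty\leq 2D_{\max}\|M\|_\infty=\alpha\|M\|_\infty$, and then concludes via the contradiction $\|M\|_\infty=\|\P_\T\P_\Omega(M)\|_\infty\leq\alpha\|M\|_\infty$; the sharp constant $2D_{\max}$ is asserted with only the remark that it is ``attained by optimizing over $|C_i|$ and $|C_j|$.'' Since you explicitly defer that estimate as fragile, your proof of the lemma rests on the structural argument: $M\in\T$ iff $(I-UU^T)M(I-UU^T)=0$, i.e.\ $M_{uv}=\phi_{c(u)}(v)+\psi_{c(v)}(u)$, and membership in $\Omega$ forces these potentials to vanish in combination on all agreement pairs, which connectivity turns into rigidity. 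This is a legitimate qualitative alternative, but note what it does not buy: the paper's quantitative contraction with rate $\alpha$ is reused later (``these sums converge geometrically with parameter $\alpha$'' for $W(\cdot)$, $V(\cdot)$, and the $\frac{1}{1-\alpha}$ factors in Lemma~\ref{lem:spec-norm-bound} and the dual-certificate bounds), so your argument can replace the paper's proof of this lemma but not its role in the rest of the appendix.

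There is one genuine error in the structural route: the non-bipartiteness claim. From $D_{\max}<\tfrac{1}{2}$ you get $\sum_{v\in C_i}A_{uv}>|C_i|/2$, but this count includes the self-term $A_{uu}=1$, so the degree of $u$ in the simple within-cluster agreement graph is only $>|C_i|/2-1$, not $>|C_i|/2$. That weaker bound does not exclude bipartiteness: with $|C_i|=2m$, let the agreement graph be the complete bipartite graph between two halves $X,Y$ of $C_i$; every degree equals $m>|C_i|/2-1$, the graph is bipartite, and the two-coloring freedom survives all off-diagonal agreement constraints (take $\phi_i\equiv b,\ \psi_i\equiv -a$ on $X$ and $\phi_i\equiv a,\ \psi_i\equiv -b$ on $Y$ with $a\neq b$, giving $M_{uv}=b-a\neq 0$ on $X\times X$). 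The repair is already inside your framework: diagonal pairs are always agreements ($A_{uu}=K^*_{uu}=1$, so $(u,u)\notin\text{\bf Supp}(B^*)$), hence any $M\in\Omega$ satisfies $\phi_i(u)+\psi_i(u)=M_{uu}=0$ for every $u$. These are self-loops in your agreement graph and they kill the coloring freedom directly: for any agreement edge $(u,v)$ one gets $\phi_i(v)=-\psi_i(u)=\phi_i(u)$, so $\phi_i$ is constant on each connected component, and connectivity (which your degree bounds do give, both within clusters and across pairs of clusters) yields $M=0$; in the example above the diagonal forces $a=b$. With this correction your argument is a valid, and arguably more transparent, proof of the lemma as stated.
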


\begin{proof}
We show that the projection $\P_\T\P_\Omega(\cdot)$ has a norm $\alpha$ strictly less than one. Then, if there exists a non-zero matrix $M\in\T\cap\Omega$, then $\|M\|_\infty=\|\P_\T\P_\Omega(M)\|_\infty\leq\alpha\|M\|_\infty<\|M\|_\infty$ is a trivial contradiction. Let $M\in\Omega$ and consider
\footnotesize\begin{equation}
\begin{aligned}
\left\|\P_\T(M)\right\|_\infty &=  \max_{i,j} \left\|\frac{1}{|C_i|}\mathbf{1}_{|C_i|\times|C_i|}M_{C_i,C_j} + \frac{1}{|C_j|}M_{C_i,C_j}\mathbf{1}_{|C_j|\times|C_j|} - \frac{1}{|C_i|\,|C_j|}\mathbf{1}_{|C_i|\times|C_i|}M_{C_i,C_j}\mathbf{1}_{|C_j|\times|C_j|}\right\|_\infty\\
&\leq 2D_{\max}\|M\|_\infty = \alpha\|M\|_\infty.
\end{aligned}
\nonumber
\end{equation}\normalsize
The last step is attained by optimizing over $|C_i|$ and $|C_j|$. This concludes the proof of the lemma.\\
\end{proof}

\begin{lemma}
$\|W(\P_\Omega(Z^*-UU^T))\|_2 \leq \frac{D_{\max}}{1-\alpha}(k^*-1)$.
\label{lem:spec-norm-bound}
\end{lemma}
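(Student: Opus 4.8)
The plan is to make the matrix $M_0:=\P_\Omega(Z^*-UU^T)$ completely explicit, bound its spectral norm by $D_{\max}(k^*-1)$ through a block-by-block argument, and then propagate that bound through the alternating-projection series defining $W$, picking up the factor $\frac{1}{1-\alpha}$.

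First I would identify $M_0$ concretely. Since $Z^*=\P_\Omega(\tilde Z)$, where $\tilde Z$ has $(i,j)$ block $\frac{1}{\sqrt{|C_i|\,|C_j|}}\mathbf 1$, and since $\tilde Z$ and $UU^T$ coincide on the diagonal blocks while $UU^T$ vanishes off the diagonal, the diagonal blocks of $M_0$ cancel and $M_0$ is supported only on cross-cluster entries. On block $(i,j)$ with $i\neq j$ it equals $\frac{1}{\sqrt{|C_i|\,|C_j|}}A_{C_i,C_j}$, where $A_{C_i,C_j}\in\{0,1\}^{|C_i|\times|C_j|}$ is the cross-cluster sub-adjacency; here I use that $A$ is binary and $K^*=0$ across clusters, so $\text{\bf Supp}(B^*)$ restricted to block $(i,j)$ is exactly the set of cross edges.

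The crux is bounding the spectral norm of each off-diagonal block by $D_{\max}$, uniformly in the cluster sizes. For $u\in C_i$ the $u$-th row sum of $A_{C_i,C_j}$ is $\sum_{v\in C_j}A_{uv}=|C_j|\,d_{u,C_j}\le D_{\max}|C_j|$, and symmetrically every column sum is at most $D_{\max}|C_i|$. Since a nonnegative matrix has spectral norm at most the geometric mean of its maximal row and column sums, $\|A_{C_i,C_j}\|_2\le\sqrt{(D_{\max}|C_j|)(D_{\max}|C_i|)}=D_{\max}\sqrt{|C_i|\,|C_j|}$, so the $(i,j)$ block of $M_0$ has spectral norm at most $D_{\max}$. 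I would then assemble these: using the standard fact that the spectral norm of a block matrix is at most the spectral norm of the $k^*\times k^*$ matrix $E$ of its block spectral norms, and that $E$ is entrywise dominated by $D_{\max}(\mathbf 1\mathbf 1^T-I)$ (with zero diagonal), monotonicity of the spectral norm under entrywise domination of nonnegative matrices gives $\|M_0\|_2\le\|E\|_2\le D_{\max}\|\mathbf 1\mathbf 1^T-I\|_2=D_{\max}(k^*-1)$, since $\mathbf 1\mathbf 1^T-I$ has eigenvalues $k^*-1$ and $-1$. Finally, $W(M_0)=M_0-\P_\T(M_0)+\P_\Omega\P_\T(M_0)-\cdots$ is a Neumann-type series whose successive round trips through $\P_\Omega\P_\T$ contract by $\alpha$; summing the geometric series yields $\|W(M_0)\|_2\le\frac{1}{1-\alpha}\|M_0\|_2\le\frac{D_{\max}}{1-\alpha}(k^*-1)$.

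The main obstacle I anticipate is the last step: unlike $\P_\Omega$, the projection $\P_\T$ is not a contraction in spectral norm (its spectral-norm operator norm can be as large as $3$, from $\P_\T(M)=UU^TM+MUU^T-UU^TMUU^T$), so the geometric summation cannot be carried out naively term by term in $\|\cdot\|_2$. I would handle this by invoking the cited geometric-convergence result, which controls the combined resolvent $(I-\P_\Omega\P_\T)^{-1}$ via the contraction factor $\alpha<1$ established in the proof of Lemma~\ref{lem:space-separation} (cf.\ Lemma~5 of \cite{JCSX11}), rather than bounding each projection individually. The genuinely new content is the size-independent per-block estimate $\|A_{C_i,C_j}\|_2\le D_{\max}\sqrt{|C_i|\,|C_j|}$, which is exactly what cancels the $\frac{1}{\sqrt{|C_i|\,|C_j|}}$ normalization and leaves the clean bound $D_{\max}(k^*-1)$.
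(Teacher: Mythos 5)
Your treatment of $M_0=\P_\Omega(Z^*-UU^T)$ is correct and is essentially the paper's own argument: the paper likewise passes to the $k^*\times k^*$ matrix of block spectral norms ($\|M\|_2\le\|M_\sigma\|_2$ for $M\in\Omega$), uses the support-pattern estimate $\|M_{C_i,C_j}\|_2\le D_{\max}\sqrt{|C_i|\,|C_j|}\,\|M_{C_i,C_j}\|_\infty$ (your row-sum/column-sum Schur-test argument is exactly the justification the paper leaves implicit behind ``by definition of $D_{\max}$''), and evaluates the off-diagonal all-ones pattern to get the factor $k^*-1$. Up to the bound $\|M_0\|_2\le D_{\max}(k^*-1)$, you and the paper coincide.

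The gap is in your final step. The inequality $\|W(M_0)\|_2\le\frac{1}{1-\alpha}\|M_0\|_2$ cannot be obtained by ``invoking the cited geometric-convergence result'': that result (the proof of Lemma~\ref{lem:space-separation}, via Lemma~5 of \cite{JCSX11}) is a contraction statement in the \emph{entrywise} $\ell_\infty$ norm, so it controls the resolvent $(I-\P_\Omega\P_\T)^{-1}$ only as an operator on $(\real^{n\times n},\|\cdot\|_\infty)$, not in spectral norm. As you yourself note, $\P_\T$ is not a spectral contraction, and neither is $\P_\Omega$ (Hadamard masking can increase $\|\cdot\|_2$), so no spectral-norm resolvent bound is available; detouring through the Frobenius norm would cost a rank-dependent factor and lose the clean $(k^*-1)$ bound. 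The repair---which is what the paper actually does---is to perform the geometric summation in the per-block $\ell_\infty$ norm and convert to spectral norm only \emph{after} summing: both $\P_\Omega$ and $\P_\T$ act block-by-block, so $N:=\P_\Omega\bigl(\sum_{l\ge0}(\P_\T\P_\Omega)^lM_0\bigr)$ lies in $\Omega$, has vanishing diagonal blocks, and satisfies $\|N_{C_i,C_j}\|_\infty\le\frac{1}{1-\alpha}\cdot\frac{1}{\sqrt{|C_i|\,|C_j|}}$; your block estimate (which depends only on the support pattern, not on the entries being a rescaled binary matrix) applied to $N$ then gives $\|N\|_2\le\frac{D_{\max}}{1-\alpha}(k^*-1)$. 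Finally, since for $M\in\Omega$ one has $W(M)=\P_{\T^\perp}(N)$ and $\P_{\T^\perp}(M)=(I-UU^T)M(I-UU^T)$ \emph{is} a genuine spectral-norm contraction, $\|W(M_0)\|_2\le\|N\|_2$ follows. In short: your key Schur-type estimate must be applied to the summed series (equivalently, term by term), not only to $M_0$; applied only to $M_0$ and followed by an $\ell_\infty$-resolvent bound, the argument does not go through.
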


\begin{proof}
For $M\in\Omega$, we have $\|M\|_2\leq\|M_\sigma\|_2$, where, $M_\sigma\in\real^{k^*\times k^*}$ with $(M_\sigma)_{i,j} = \|M_{C_i,C_j}\|_2$. By definition of $D_{\max}$, we have $\|M_{C_i,C_j}\|_2\leq D_{\max}\sqrt{|C_i|\,|C_j|}\|M_{C_i,C_j}\|_\infty$. Thus, 
\begin{equation}
\begin{aligned}
\|\P_\Omega(Z^*-UU^T)\|_2&\leq D_{\max}\left\|\left[\begin{tabular}{cccc} $0$& $1$ & $\cdot$ & $1$ \\ $1$ & $0$ & $\cdot$ & $1$ \\ $\cdot$&$\cdot$ &$\cdot$ &$\cdot$ \\ $1$& $1$& $\cdot$ & $0$\end{tabular}\right]\right\|_2\\
&=D_{\max}(k^*-1).
\end{aligned}
\nonumber
\end{equation}
The rest of the proof is straight forward as follows
\begin{equation}
\begin{aligned}
\|W(\P_\Omega(Z^*-UU^T))\|_2 &= \left\|\P_{\T^\perp}\P_\Omega\left(\sum_{i=0}^\infty(\P_\T\P_\Omega)^i(\P_\Omega(Z^*-UU^T))\right)\right\|_2\\
&\leq\left\|\P_\Omega\left(\sum_{i=0}^\infty(\P_\T\P_\Omega)^i(\P_\Omega(Z^*-UU^T))\right)\right\|_2\\
&=\frac{D_{\max}}{1-\alpha}(k^*-1).
\end{aligned}
\nonumber
\end{equation}
This concludes the proof of the lemma.\\
\end{proof}

\end{document}